\documentclass{article}

\usepackage[final]{neurips_2026}
\usepackage{hyperref}
\usepackage[utf8]{inputenc}
\usepackage[T1]{fontenc}
\usepackage{url}
\usepackage{microtype}
\usepackage{amsmath}
\usepackage{amsfonts}
\usepackage{amssymb}
\usepackage{amsthm}
\usepackage{algorithm}
\usepackage{algpseudocode}
\usepackage{booktabs}
\usepackage{float}
\usepackage{graphicx}
\usepackage{multirow}
\usepackage{pifont}
\usepackage{placeins}
\usepackage{tabularx}
\usepackage{xcolor}

\hypersetup{
  colorlinks=true,
  linkcolor=blue,
  urlcolor=blue,
  citecolor=green,
  pdfstartview=Fit,
  breaklinks=true
}

\newtheorem{proposition}{Proposition}

\title{STEPS: A Temporal Smooth Error Propagation Solver on the Manifolds for Test-Time Adaptation in Time Series Forecasting}
\author{
Jiaqi Liu$^{1,*}$ \quad
Yifan Ouyang$^{1,*}$ \quad
Zhifei Song$^{1}$ \quad
Sim Kuan Goh$^{1}$ \quad
Ashwaq Qasem$^{1,2,\dagger}$ \\
$^1$School of Artificial Intelligence and Robotics, Xiamen University Malaysia\\
$^2$S.M.A.R.T. NEXUS Centre of Excellence, Xiamen University Malaysia\\
Sepang, Selangor 43900, Malaysia\\
\texttt{ait2209080@xmu.edu.my} \quad
\texttt{ait2209084@xmu.edu.my} \quad
\texttt{ait2209089@xmu.edu.my}\\
\texttt{simkuan.goh@xmu.edu.my} \quad
\texttt{ashwaq.qasem@xmu.edu.my}\\
\small $^*$Equal contribution. \quad $^\dagger$Corresponding author: Ashwaq Qasem.
}

\begin{document}

\maketitle

\begin{abstract}
           Test-Time Adaptation (TTA) aims to improve time series forecasting under distribution shifts by using limited observations revealed during inference. However, forecasting TTA must operate in a source-free online setting, where the adaptation signal is short, temporally correlated, and potentially noisy. Existing methods can therefore suffer from weak identifiability, error accumulation, and unstable long-horizon corrections when the revealed prefix is sparse or contaminated. To address these issues, we propose STEPS, a Smooth Temporal Error Propagation Solver for TTA in time-series forecasting. STEPS reformulates forecasting TTA as a Dirichlet Boundary Value Problem on a temporal manifold, where the revealed prefix error serves as the boundary condition for the unknown future error field. Then, STEPS solves a smooth and bounded correction field in prediction space: a Local Solver propagates prefix errors under temporal smoothness, a Global Solver retrieves stable cross-window error memory and Spatiotemporal Manifold Fusion (SMF) integrates both solutions into the final correction. Across six standard benchmarks and four frozen backbones, STEPS achieves an average relative MSE reduction of 26.82\% over the zero-shot backbone, exceeding the strongest compared TTA baseline by 12.77\%. Additional sparse prefix and contamination tests confirm the robustness of STEPS under limited and noisy prefixes.
           
		\end{abstract}

\section*{Keywords}
Time Series Forecasting; Test-Time Adaptation; Distribution Shift; Forecast Error Correction; Robust Forecasting; Noisy Observations

	\section{Introduction}
	Time-series forecasting models are commonly deployed under temporal distribution shift. Changes in seasonality, operating regimes, sensor conditions, market dynamics, or weather patterns can alter the input-output relation after training\cite{kim2021reversible, liu2022nonstationary, dai2024ddn, ye2024frequency, qin2024evolving, zhao2024proactive}. As a result, even strong modern forecasting backbones\cite{kim2024self, woo2024unified} may exhibit systematic residual patterns during inference.
	
	Test-Time Adaptation (TTA) aims to exploit target-domain information observed at inference time without retraining the full source model\cite{zhang2024test, karmanov2024efficient, yuan2024tea, ma2024improved, shin2024tta, jia2024tinytta}. Forecasting TTA differs from vision-centric TTA in an important way: after a prediction is issued, ground truth values become observable with a short delay. This delayed supervision makes the residual between the zero-shot forecast and the revealed observations a direct test-time signal. Recent forecasting-oriented methods commonly operationalize this prefix as a small test-time supervision set: it can fit calibration rules, drive lightweight adaptation modules, or estimate an output-space correction for the current forecast~\cite{tafas2025, petsa2025, cosa2026}. Related online and normalization-based approaches improve robustness under non-stationary streams through model-side normalization or continual updates\cite{qin2024evolving, zhao2024proactive, wu2026test}.
	
	This training-oriented route makes delayed observations practically useful, but it also makes adaptation depend on a very small, temporally correlated, and occasionally corrupted prefix. When this prefix is treated as ordinary fitting data, sparse evidence or outliers can dominate the update, amplify short-term noise, and provide too little support for learning the best long-horizon correction pattern. The core difficulty is therefore not whether the prefix residual is informative, but how it should constrain the unobserved part of the future error trajectory.
	
	We formulate this problem from a boundary-value perspective. Rather than using the revealed prefix as a small dataset for online fitting, we view it as boundary evidence for an unknown future residual field. The prediction horizon is represented as a one-dimensional temporal manifold; observed prefix errors specify boundary values, while unobserved steps are inferred through a smooth and bounded extension in prediction space. This recasts forecasting TTA as a Dirichlet Boundary Value Problem over the output trajectory, where sparse or unreliable prefix evidence constrains the solution without being allowed to freely determine the whole horizon.
	
	Based on this formulation, we propose \textbf{STEPS}, a Smooth Temporal Error Propagation Solver for forecasting TTA. STEPS freezes the forecasting backbone and estimates only an output-space correction. Its local solver propagates prefix residuals through a low-Dirichlet-energy temporal extension, its global memory decoder retrieves recurring long-range residual patterns, and Spatiotemporal Manifold Fusion (SMF) combines both responses under an explicit correction bound. The resulting correction field uses the available prefix signal while controlling extrapolation beyond sparse or unreliable evidence.
	
	Our main contributions are:
	\begin{enumerate}
		\item \textbf{A Dirichlet Boundary Value Problem formulation.} We reformulate forecasting TTA as solving an unknown future error field on the prediction-horizon manifold, where the revealed prefix error provides Dirichlet boundary values rather than labels for online backbone fitting.
		\item \textbf{A Smooth Temporal Error Propagation Solver.} STEPS constructs an output-space correction through Prefix Boundary Selection, Local Solver: Smooth Propagation, Global Solver: Memory Retrieval, and Spatiotemporal Manifold Fusion (SMF), while keeping the forecasting backbone frozen.
		\item \textbf{Robust adaptation from limited prefix evidence.} Across six benchmarks and four frozen backbones, STEPS reduces MSE by 26.82\% over the zero-shot backbone on average and exceeds the strongest compared TTA baseline by 12.77 percentage points; sparse-prefix and prefix-contamination studies further show that STEPS can use limited boundary evidence without amplifying unreliable prefix errors.
	\end{enumerate}

	\section{Related Work}

\paragraph{Test-time adaptation.}
Test-time adaptation (TTA) studies how a deployed model can adjust to target-domain data at inference time, usually without access to the source training set. Early test-time training used auxiliary self-supervised objectives on test samples\cite{sun2020test}. Tent later showed that the fully test-time setting can be handled by entropy minimization over normalization parameters\cite{wang2021tent}, while source-free adaptation transferred a pretrained source hypothesis without revisiting source data\cite{liang2020shot}. As deployment scenarios moved from fixed target batches to online streams, repeated updates introduced pseudo-label uncertainty, model drift, and stability concerns\cite{wang2022cotta, wang2022note, niu2022eata}. Recent work addresses these issues by improving adaptation objectives, reducing the number of updated parameters, or adding memory for dynamic streams\cite{zhang2024test, yuan2024tea, ma2024improved, shin2024tta, jia2024tinytta, yuan2023rotta, gong2023sotta}. This line of work treats test-time observations as signals for updating or regularizing a deployed model; forecasting TTA inherits this idea, but its delayed labels require a different view of what the test-time signal should supervise.

\paragraph{TTA in non-stationary forecasting.}
Time-series forecasting is a natural but distinct TTA setting because target-domain evidence arrives in temporal order and the prediction horizon is itself structured. Before explicit forecasting TTA, non-stationarity was mainly addressed by making forecasters less sensitive to temporal shifts, for example through reversible normalization, stationarization, invariant temporal representations, dynamic normalization, frequency-aware normalization, or online non-stationary learning\cite{kim2021reversible, liu2022nonstationary, du2021adarnn, liu2024time, dai2024ddn, ye2024frequency, qian2024efficient}. These methods improve robustness to changing trend, scale, and seasonality, but they do not fully exploit the fact that forecasting targets are eventually revealed after prediction. Forecasting-specific TTA uses this delayed supervision as a small test-time fitting set: TAFAS calibrates input and output distributions with gated adaptation\cite{tafas2025}; PETSA makes such adaptation parameter-efficient\cite{petsa2025}; COSA uses recent residual context for direct output-space correction\cite{cosa2026}. These methods show that the prefix can guide adaptation, but their supervision view leaves the prefix vulnerable to sparsity and outliers and does not explicitly define how its evidence should extend across unobserved horizon steps. STEPS instead treats the prefix as boundary evidence for an error field, so the revealed residual constrains a smooth and bounded correction rather than serving as ordinary fitting data.

\paragraph{Smoothness and boundary-value propagation.}
A separate line of work explains why limited observations can be propagated through geometric structure. Manifold learning methods use neighborhood graphs to recover low-dimensional organization in high-dimensional data\cite{tenenbaum2000global, belkin2003laplacian, coifman2006diffusion}, and manifold regularization turns this idea into a smoothness preference for learned functions\cite{belkin2006manifold}. Graph-based semi-supervised learning gives a closely related operational view: observed labels or boundary values can be extended to unobserved nodes by minimizing graph-Laplacian variation, as in harmonic extension, consistency-based propagation, and Poisson learning\cite{zhu2003semi, zhou2003learning, calder2020poisson, holtz2024continuous}. Time-series graph models usually apply graphs to dependencies among variables or sensors\cite{wu2020connecting, marisca2024graph, cini2023graph}. Our STEPS draws on this principle at the level of the forecasting residual field.

Figure~\ref{fig:overall_framework} gives the overall solver pipeline used in the following methodology section.

\begin{figure*}[t]
	\centering
	\includegraphics[width=\textwidth]{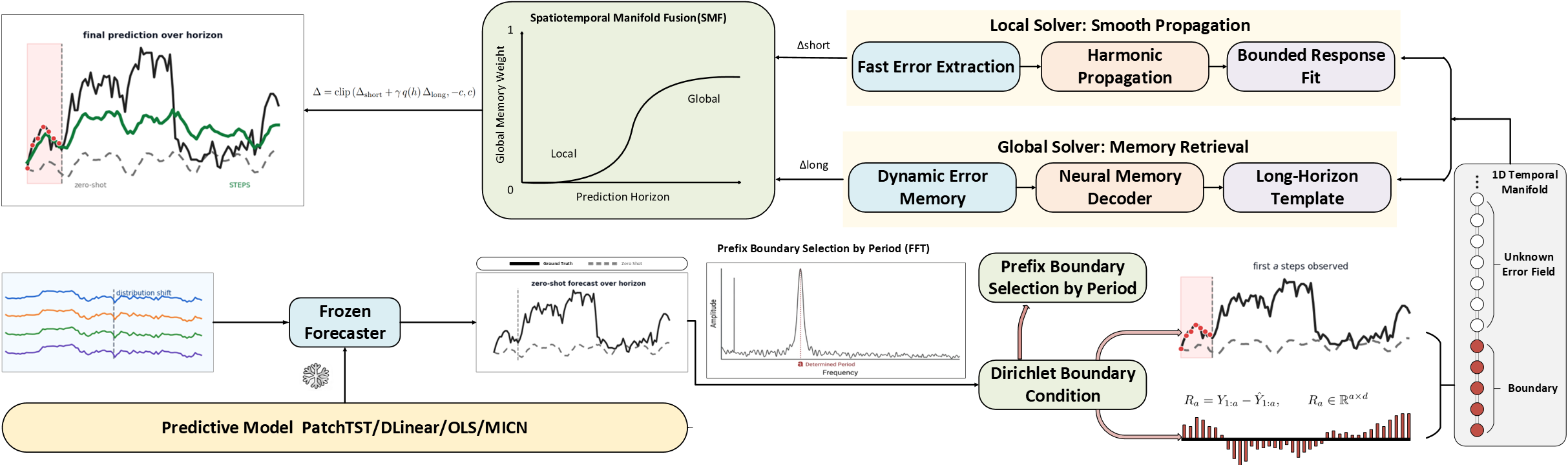}
	\caption{Overall framework of STEPS. A frozen forecaster produces the zero-shot trajectory, Prefix Boundary Selection by Period (FFT) converts the revealed prefix error into a Dirichlet Boundary Condition on the 1D temporal manifold, Local Solver: Smooth Propagation and Global Solver: Memory Retrieval estimate short- and long-horizon correction fields, and Spatiotemporal Manifold Fusion (SMF) forms the bounded final prediction.}
	\label{fig:overall_framework}
\end{figure*}

\section{Methodology}

\subsection{From TTA to a Dirichlet Boundary Value Problem}

A Dirichlet boundary value problem recovers an unknown field from observed boundary values by selecting a smooth extension. STEPS uses this view to estimate the future residual field while keeping the forecasting backbone frozen. Given $X\in\mathbb{R}^{L\times d}$, the backbone produces
\begin{equation}
	\hat{Y}=f_\theta(X), \qquad \theta \ \text{frozen}, \qquad \hat{Y}\in\mathbb{R}^{H\times d}.
\end{equation}
When the first $a$ future steps become visible during rollout, their discrepancy from the zero-shot forecast gives the prefix error
\begin{equation}
	R_a = Y_{1:a}-\hat{Y}_{1:a}, \qquad R_a\in\mathbb{R}^{a\times d}.
	\label{eq:prefix_error}
\end{equation}
The future horizon is a one-dimensional temporal manifold whose nodes are horizon steps; predictions, targets, and errors are fields on it. The prefix $R_a$ fixes the boundary values, and STEPS estimates the remaining error field under the \textbf{Smooth Error Manifold Assumption}: after the frozen backbone captures the dominant temporal structure, the residual component supported by the prefix should admit a low-variation extension unless the prefix suggests otherwise.

STEPS, the \textbf{Smooth Temporal Error Propagation Solver}, constructs a bounded output-space correction $\Delta\in\mathbb{R}^{H\times d}$ and applies it directly:
\begin{equation}
	\hat{Y}_{\mathrm{STEPS}}=\hat{Y}+\Delta.
\end{equation}
Its pipeline has four named steps: Prefix Boundary Selection, Local Solver: Smooth Propagation, Global Solver: Memory Retrieval, and Spatiotemporal Manifold Fusion (SMF).

\subsection{Prefix Boundary Selection}

The Prefix Boundary Selection by Period block turns delayed observations into current-window boundary evidence. The prefix error observes the current backbone deviation and serves as the Dirichlet boundary signal, rather than a label set for online parameter fitting. STEPS selects $a$ by estimating the dominant input period with FFT and exposing the corresponding prefix, tying the boundary to a meaningful temporal scale.

For the global decoder, the sparse prefix error is represented as a masked full-horizon tensor:
\begin{equation}
	\tilde{R}_{h} =
	\begin{cases}
		R_h, & h\le a,\\
		0, & h>a,
	\end{cases}
	\qquad
	m_h=\mathbb{I}[h\le a].
\end{equation}
The pair $(\tilde{R},m)$ marks observed and unobserved error positions for the global decoder.

\subsection{Local Solver: Smooth Propagation}

The local branch solves the prefix-identifiable part of the Dirichlet view. It propagates the selected boundary through fast error extraction, harmonic propagation, and bounded bias-aware response fitting.

\paragraph{Fast error extraction.}
STEPS removes low-frequency drift before propagating local deviations:
\begin{equation}
	R_{\mathrm{fast}}
	=
	R_a-\Pi_{\mathrm{low}}(R_a),
\end{equation}
where $\Pi_{\mathrm{low}}$ is a least-squares projection onto constant and linear temporal components.

The fast error is extended by harmonic propagation:
\begin{equation}
	\Delta_{\mathrm{harm}}
	=
	P_{:,1:a}R_{\mathrm{fast}},
	\qquad
	P=(D^\top D+\alpha I)^{-1}.
	\label{eq:local_harmonic_solver}
\end{equation}
where $D$ is the first-order temporal difference matrix and $D^\top D$ is the graph Laplacian of the one-dimensional temporal chain. This selects a low-variation correction field with discrete Dirichlet energy
\begin{equation}
	\sum_{h=1}^{H-1}\|\Delta_{h+1}-\Delta_h\|_2^2,
\end{equation}

\paragraph{Local bias field.}
To capture systematic over- or under-prediction, STEPS constructs a local bias field:
\begin{equation}
	\Delta_{\mathrm{bias}}
	=
	\mathbf{1}_{H}\,\mu_a^\top,
	\qquad
	\mu_a=\frac{1}{a}\sum_{h=1}^{a} R_h.
\end{equation}

\paragraph{Bounded response fitting.}
STEPS combines harmonic and bias fields by solving a small ridge problem on the observed prefix:
\begin{equation}
	\beta^\star
	=
	\arg\min_{\beta}
	\left\|
	B_{1:a}\beta - R_a
	\right\|_2^2
	+
	\lambda_{\mathrm{ridge}}\|\beta\|_2^2,
	\qquad
	B=[\Delta_{\mathrm{harm}},\Delta_{\mathrm{bias}}].
\end{equation}
The coefficients are clipped to avoid over-amplifying sparse evidence:
\begin{equation}
	\Delta_{\mathrm{short}}
	=
	\lambda_{\mathrm{short}} B\,
	\mathrm{clip}(\beta^\star,-b,b).
\end{equation}
This closed-form branch has no trainable neural weights and uses only the prefix error.

\subsection{Global Solver: Memory Retrieval}

The global branch supplies information not identifiable from the current boundary alone, such as persistent bias, delayed phase error, and long-range drift. It maintains Global Error Memory and decodes a full-horizon response conditioned on the current forecast and prefix evidence.

\paragraph{Dynamic error memory.}
STEPS maintains Global Error Memory only over completed rollout windows whose full targets have already been observed:
\begin{equation}
	M_t
	=
	\rho M_{t-1}
	+
	(1-\rho)\bar{R}_t,
	\qquad
	\bar{R}_t=\frac{1}{|\mathcal{B}_t|}\sum_{i\in\mathcal{B}_t}\left(Y_i-\hat{Y}_i\right).
	\label{eq:dynamic_memory}
\end{equation}
Here $M_t\in\mathbb{R}^{H\times d}$ is the full-horizon error template memory, $\rho$ is the forgetting factor, and $\mathcal{B}_t$ is the current evaluated batch. To avoid target leakage, the memory update is performed after the corresponding forecast window has been fully evaluated; the updated memory is used only for subsequent windows and never for correcting the unobserved future steps of the same window.

\paragraph{Neural error memory decoder.}
Dynamic memory stores a full-horizon template whose response is conditioned by the current forecast, prefix boundary, and local correction. STEPS uses a neural error-memory decoder:
\begin{equation}
	\Delta_{\mathrm{long}}
	=
	g_\phi
	\left(
	\hat{Y},
	\Delta_{\mathrm{short}},
	\tilde{R},
	m,
	M_t,
	z_t
	\right),
	\label{eq:neural_memory_decoder}
\end{equation}
where $z_t$ is a compact context vector of recent error statistics and $\phi$ denotes decoder parameters. The decoder is trained offline, frozen during test-time adaptation, and predicts only error tendencies rather than $Y$ itself.

\subsection{Spatiotemporal Manifold Fusion (SMF)}

SMF is the final output-space solver. It keeps corrections local near the observed boundary, increases memory influence toward distant horizons, and applies an explicit magnitude bound. The horizon-aware ramp is
\begin{equation}
	q(h)=\sigma\left(\kappa\left(\frac{h}{H}-\tau\right)\right),
\end{equation}
where $\kappa$ controls sharpness and $\tau$ controls the transition point. The final correction is
\begin{equation}
	\Delta
	=
	\mathrm{clip}
	\left(
	\Delta_{\mathrm{short}}
	+
	\gamma q(h)\Delta_{\mathrm{long}},
	-c,c
	\right),
	\label{eq:steps_response}
\end{equation}
where $\Delta_{\mathrm{short}}$ is the local response, $\Delta_{\mathrm{long}}$ is the global memory response, $\gamma$ controls the global contribution, and $c$ is the safety bound. The ramp emphasizes local propagation near the prefix and Global Error Memory at distant horizons, while clipping keeps the correction bounded under sparse or contaminated boundary evidence.

\section{Experiments}

We evaluate STEPS from three perspectives: average forecasting accuracy, stability under corrupted prefix evidence, and correction quality when only sparse test-time observations are available. The first setting follows the standard long-horizon forecasting protocol, while the latter two stress the reliability of the boundary-value formulation.

\subsection{Experimental Setup}

We evaluate on six standard multivariate forecasting benchmarks used by recent forecasting TTA work: ETTh1, ETTh2, ETTm1, ETTm2, Exchange, and Weather. STEPS is applied to frozen DLinear~\cite{zeng2023dlinear}, PatchTST~\cite{nie2023patchtst}, OLS~\cite{toner2024ols}, and MICN~\cite{wang2023micn} forecasters with look-back length $L=96$ and horizons $H\in\{96,192,336,720\}$. During test-time rollout, only the revealed prefix $Y_{1:a}$ is used to form $R_a=Y_{1:a}-\hat{Y}_{1:a}$; the remaining future labels are never used for adaptation. We compare with the zero-shot backbone and forecasting TTA baselines TAFAS~\cite{tafas2025}, PETSA~\cite{petsa2025}, COSA-F, and COSA-P~\cite{cosa2026}. For all TTA methods, backbone parameters remain frozen.

We report mean squared error (MSE) as the primary metric and mean absolute error (MAE) in diagnostic tables; lower is better. Relative improvement is computed as $(\mathrm{MSE}_{\mathrm{base}}-\mathrm{MSE}_{\mathrm{method}})/\mathrm{MSE}_{\mathrm{base}}$ against the corresponding zero-shot backbone. STEPS trains only the global error-memory decoder offline; test-time adaptation performs a forward pass plus closed-form local propagation and SMF clipping, without online backpropagation or backbone weight updates. Unless otherwise stated, experiments are run with PyTorch 2.5.1 and CUDA 12.1 on an NVIDIA GeForce RTX 4090 GPU.

\subsection{Main Forecasting Accuracy Across Horizons}

The main forecasting comparison is reported in Table~\ref{tab:main_results_4_backbones_steps}.

Table~\ref{tab:main_results_4_backbones_steps} reports the main forecasting comparison across datasets, horizons, and frozen backbones. STEPS achieves the strongest overall average gain without modifying the backbone, and the improvement is consistent across both simple linear models and stronger neural forecasters. This supports the central claim that smooth output-space error propagation can provide reliable test-time correction across diverse forecasting settings.

\subsection{Robustness and Component Analysis}

\paragraph{Clean component ablation.}
We isolate the main terms in Eq.~\ref{eq:steps_response} on DLinear across six datasets and all four prediction horizons. For each dataset-horizon pair, we train the full STEPS solver once and evaluate all ablated variants by disabling components only at test time. This shared-checkpoint protocol attributes the performance change to the removed solver term rather than to separate re-training.
The aggregated ablation is shown in Table~\ref{tab:component_ablation_rfmd}.

Table~\ref{tab:component_ablation_rfmd} separates the roles of the main solver terms. Local-only correction remains useful near the prefix but weakens as the horizon grows, where the boundary alone becomes less informative. Global Error Memory supplies cross-window error templates for those distant nodes, and removing memory consistently hurts long-horizon correction. The unbounded variant is slightly stronger under clean prefixes because it follows the observed boundary more aggressively; this is useful for interpreting the next robustness test, where the same aggressiveness becomes a liability.

\paragraph{Contaminated prefix robustness.}

We then corrupt the boundary evidence itself. During inference, a fraction of the visible prefix observations is replaced by outliers with magnitude $6\sigma$, while MSE is computed against the clean future target. All compared variants use the same revealed-prefix interface, so differences reflect how strongly each method trusts noisy boundary evidence.
The contamination results are summarized in Table~\ref{tab:prefix_outlier_robustness}.

Table~\ref{tab:prefix_outlier_robustness} shows the trade-off clearly. The unbounded STEPS variant is competitive when the boundary is clean, but degrades faster once revealed errors are contaminated. Full STEPS has the smallest degradation among adaptive methods, indicating that SMF uses the prefix without allowing a few corrupted boundary values to dominate the correction field. More aggressive prefix-fitting baselines react more strongly to the same anomalies and suffer larger error growth.

\subsection{Generalization Under Sparse Test-Time Observations}

Finally, we restrict the revealed boundary to only a few observations. This setting tests whether STEPS can still extract useful correction from sparse evidence without turning a small prefix into an unstable future trajectory.

\paragraph{Three-point sparse boundary.}

Table~\ref{tab:real_small_support_k3_only} reports the three-point sparse-boundary test.

Table~\ref{tab:real_small_support_k3_only} uses only $k=3$ revealed future points. We report near-field response and far-field preservation separately because a sparse boundary can improve nearby steps while still causing distant drift. STEPS is conservative near the boundary and avoids severe far-horizon degradation on most datasets, consistent with treating sparse observations as boundary evidence for a smooth field rather than as enough supervision to freely fit a future trajectory.

\paragraph{Sparse anchors in a predicted support window.}
We also evaluate sparse anchors inside a predicted support window. The first 36 forecasted steps form the support region, but only a small percentage of positions are replaced by true observations; all remaining positions keep the zero-shot predictions and therefore provide zero residual.
Table~\ref{tab:sparse_anchor_limited} reports this sparse-anchor protocol.

Table~\ref{tab:sparse_anchor_limited} shows the same behavior in a more practical sparse-anchor protocol. With only two to seven true anchors inside the support window, STEPS obtains the best average MSE across all tested anchor ratios. Together with the contamination results, this indicates that the solver can use limited boundary evidence while remaining stable when that evidence is incomplete or noisy.

\FloatBarrier

\section{Conclusion}

We presented \textbf{STEPS}, a Smooth Temporal Error Propagation Solver for test-time adaptation in time-series forecasting. STEPS reformulates forecasting TTA as a Dirichlet Boundary Value Problem on a temporal manifold, where revealed prefix errors act as boundary evidence for a smooth and bounded output-space correction. By combining Local Smooth Propagation, Global Error Memory Retrieval, and Spatiotemporal Manifold Fusion, STEPS improves frozen-backbone forecasts without online parameter updates and remains robust under sparse or contaminated prefix evidence.

\paragraph{Limitations and future work.}
The current formulation still has limitations. Its correction quality depends on the reliability of the revealed prefix and the relevance of accumulated error memory. When the target stream undergoes an extremely abrupt regime change, the visible prefix may no longer provide a stable boundary for the unobserved future; similarly, when historical rollout windows contain weak recurrence, the global memory branch may contribute less reliable long-range templates. STEPS also uses a fixed temporal geometry and a fixed horizon-aware fusion schedule, which may be suboptimal for streams whose characteristic periods or uncertainty levels change rapidly. Future work will explore adaptive temporal geometry, uncertainty-aware boundary weighting, and more flexible long-horizon fusion for broader streaming deployment.

\begingroup
\setlength{\textfloatsep}{4pt plus 1pt minus 1pt}
\setlength{\floatsep}{4pt plus 1pt minus 1pt}
\setlength{\intextsep}{4pt plus 1pt minus 1pt}

\begin{table*}[t]
\centering
\caption{Prediction accuracy comparison on standard forecasting benchmarks. The 96/192/336/720 rows report MSE, where lower is better. The Imp. row reports average relative MSE improvement over Baseline.}
\label{tab:main_results_4_backbones_steps}
\tiny
\setlength{\tabcolsep}{1.2pt}
\renewcommand{\arraystretch}{0.82}
\resizebox{\textwidth}{!}{
\begin{tabular}{cc*{12}{c}}
\toprule
& & \multicolumn{6}{c}{\textbf{DLinear}} & \multicolumn{6}{c}{\textbf{PatchTST}} \\
\cmidrule(lr){3-8}\cmidrule(lr){9-14}
\textbf{Data} & \textbf{H} & Base & TAFAS & PETSA & COSA-F & COSA-P & STEPS & Base & TAFAS & PETSA & COSA-F & COSA-P & STEPS \\
\midrule
\multirow{5}{*}{ETTh1} & 96 & .4695 & .4618 & .4594 & .4574 & \textbf{.4482} & \underline{\textbf{.3754}} & .4312 & .4262 & .4269 & .4242 & .4238 & \underline{\textbf{.3533}} \\
 & 192 & .5213 & .5117 & .5118 & .5066 & .5050 & \underline{\textbf{.4366}} & .4955 & .4865 & .4854 & .4830 & .4805 & \underline{\textbf{.4085}} \\
 & 336 & .5659 & .5604 & .5617 & .5528 & .5456 & \underline{\textbf{.4781}} & .5559 & .5478 & .5475 & .5438 & .5320 & \underline{\textbf{.4408}} \\
 & 720 & .7117 & .6820 & .6743 & .6107 & .5896 & \underline{\textbf{.5797}} & .7117 & .6860 & .6822 & .6113 & .5822 & \underline{\textbf{.5459}} \\
 & Imp. & - & +2.16\% & +2.49\% & +5.48\% & +7.10\% & \underline{\textbf{+17.59\%}} & - & +2.01\% & +2.17\% & +5.11\% & +6.81\% & \underline{\textbf{+19.91\%}} \\
\midrule
\multirow{5}{*}{ETTh2} & 96 & .2323 & .2303 & .2306 & .2300 & .2281 & \underline{\textbf{.1572}} & .2362 & .2351 & .2362 & .2349 & .2343 & \underline{\textbf{.1625}} \\
 & 192 & .2862 & .2842 & .2876 & .2827 & .2819 & \underline{\textbf{.2040}} & .2826 & .2758 & .2773 & .2665 & .2608 & \underline{\textbf{.1996}} \\
 & 336 & .3252 & .3185 & .3184 & .3050 & .3083 & \underline{\textbf{.2283}} & .3199 & .3125 & .3132 & .2971 & .2978 & \underline{\textbf{.2164}} \\
 & 720 & .4087 & .3873 & .3853 & .3062 & .3477 & \underline{\textbf{.2867}} & .4264 & .4005 & .4012 & .3233 & .3428 & \underline{\textbf{.3119}} \\
 & Imp. & - & +2.21\% & +2.01\% & +8.38\% & +5.86\% & \underline{\textbf{+30.17\%}} & - & +2.81\% & +2.47\% & +9.39\% & +8.76\% & \underline{\textbf{+29.94\%}} \\
\midrule
\multirow{5}{*}{ETTm1} & 96 & .3715 & .3497 & .3524 & .3456 & .3475 & \underline{\textbf{.2556}} & .4024 & .3894 & .3937 & .3625 & .3626 & \underline{\textbf{.2624}} \\
 & 192 & .4438 & .4166 & .4178 & .4113 & .4122 & \underline{\textbf{.3557}} & .4512 & .4372 & .4413 & .4250 & .4258 & \underline{\textbf{.3789}} \\
 & 336 & .5183 & .4799 & .4803 & .4753 & .4858 & \underline{\textbf{.4127}} & .5081 & .4905 & .4946 & \textbf{.4568} & .4697 & \underline{\textbf{.4142}} \\
 & 720 & .5929 & .5488 & .5532 & \textbf{.4774} & .4991 & .4885 & .5629 & .5427 & .5462 & \textbf{.4681} & .4882 & .4696 \\
 & Imp. & - & +6.71\% & +6.26\% & \textbf{+10.52\%} & +8.92\% & \underline{\textbf{+22.26\%}} & - & +3.35\% & +2.49\% & +10.67\% & +9.09\% & \underline{\textbf{+21.47\%}} \\
\midrule
\multirow{5}{*}{ETTm2} & 96 & .1598 & .1584 & .1584 & .1583 & .1586 & \underline{\textbf{.1211}} & .1584 & .1581 & .1583 & .1558 & .1562 & \underline{\textbf{.1136}} \\
 & 192 & .1930 & .1913 & .1913 & .1904 & .1905 & \underline{\textbf{.1731}} & .2059 & .2036 & .2037 & .2007 & .2022 & \underline{\textbf{.1708}} \\
 & 336 & .2324 & .2289 & .2292 & .2083 & .2242 & \underline{\textbf{.1832}} & .2458 & .2451 & .2452 & \textbf{.2258} & .2352 & \underline{\textbf{.1926}} \\
 & 720 & .3062 & .2968 & .2963 & \textbf{.2215} & .2316 & .2507 & .3268 & .3268 & .3256 & \textbf{.2446} & .2645 & .2803 \\
 & Imp. & - & +1.58\% & +1.59\% & +10.08\% & +7.48\% & \underline{\textbf{+18.46\%}} & - & +0.40\% & +0.44\% & +9.36\% & +6.64\% & \underline{\textbf{+20.31\%}} \\
\midrule
\multirow{5}{*}{Exchange} & 96 & .0913 & .0885 & .0878 & .0812 & .0834 & \underline{\textbf{.0636}} & .0867 & .0843 & .0837 & .0765 & .0788 & \underline{\textbf{.0566}} \\
 & 192 & .1827 & .1760 & .1730 & .1459 & .1519 & \underline{\textbf{.1237}} & .1877 & .1805 & .1832 & .1464 & .1570 & \underline{\textbf{.1173}} \\
 & 336 & .3277 & .2941 & .2920 & .2039 & .2480 & \underline{\textbf{.1488}} & .3389 & .3275 & .3300 & \textbf{.1983} & .2445 & \underline{\textbf{.1709}} \\
 & 720 & .8873 & .8762 & .8781 & \textbf{.3494} & .4481 & .4215 & .8648 & .8659 & .8643 & \textbf{.3543} & .4662 & .5756 \\
 & Imp. & - & +4.56\% & +5.27\% & \textbf{+32.40\%} & +24.83\% & \underline{\textbf{+42.41\%}} & - & +2.46\% & +2.14\% & +33.57\% & +24.85\% & \underline{\textbf{+38.80\%}} \\
\midrule
\multirow{5}{*}{Weather} & 96 & .1954 & .1796 & .1823 & .1773 & .1793 & \underline{\textbf{.1192}} & .1742 & .1724 & .1743 & .1624 & .1634 & \underline{\textbf{.1275}} \\
 & 192 & .2403 & .2244 & .2254 & .2216 & .2217 & \underline{\textbf{.1621}} & .2195 & .2147 & .2167 & .2006 & .2108 & \underline{\textbf{.1663}} \\
 & 336 & .2918 & .2709 & .2740 & .2567 & .2626 & \underline{\textbf{.1912}} & .2766 & .2666 & .2701 & .2451 & .2488 & \underline{\textbf{.2250}} \\
 & 720 & .3643 & .3500 & .3497 & .2581 & .2708 & \underline{\textbf{.2354}} & .3544 & .3383 & .3442 & \textbf{.2590} & .2713 & .2755 \\
 & Imp. & - & +6.45\% & +5.75\% & +14.56\% & +12.91\% & \underline{\textbf{+35.36\%}} & - & +2.84\% & +1.61\% & +13.42\% & +10.92\% & \underline{\textbf{+22.99\%}} \\
\midrule
\midrule
& & \multicolumn{6}{c}{\textbf{OLS}} & \multicolumn{6}{c}{\textbf{MICN}} \\
\cmidrule(lr){3-8}\cmidrule(lr){9-14}
\textbf{Data} & \textbf{H} & Base & TAFAS & PETSA & COSA-F & COSA-P & STEPS & Base & TAFAS & PETSA & COSA-F & COSA-P & STEPS \\
\midrule
\multirow{5}{*}{ETTh1} & 96 & .4511 & .4409 & .4391 & .4390 & .4372 & \underline{\textbf{.3556}} & .5103 & .4901 & .4898 & .4693 & .4684 & \underline{\textbf{.4164}} \\
 & 192 & .5046 & .4934 & .4937 & .4915 & .4906 & \underline{\textbf{.4176}} & .5954 & .5617 & .5620 & .5372 & \textbf{.5328} & \underline{\textbf{.4949}} \\
 & 336 & .5510 & .5440 & .5465 & .5385 & .5320 & \underline{\textbf{.4545}} & .6615 & .6387 & .6420 & .5950 & .5878 & \underline{\textbf{.5671}} \\
 & 720 & .6997 & .6630 & .6431 & .5969 & .5733 & \underline{\textbf{.5597}} & .9233 & .8142 & .8375 & .7001 & .6504 & \underline{\textbf{.5487}} \\
 & Imp. & - & +2.75\% & +3.43\% & +5.56\% & +6.84\% & \underline{\textbf{+18.98\%}} & - & +6.22\% & +5.47\% & +13.01\% & +14.86\% & \underline{\textbf{+22.53\%}} \\
\midrule
\multirow{5}{*}{ETTh2} & 96 & .2306 & .2285 & .2288 & .2232 & .2265 & \underline{\textbf{.1547}} & .2582 & .2551 & .2552 & .2492 & \textbf{.2485} & \underline{\textbf{.1874}} \\
 & 192 & .2839 & .2824 & .2848 & .2796 & .2791 & \underline{\textbf{.1996}} & .3282 & .3179 & .3258 & .3049 & \textbf{.3017} & \underline{\textbf{.2369}} \\
 & 336 & .3258 & .3182 & .3189 & .3003 & .3043 & \underline{\textbf{.2253}} & .3732 & .3482 & .3497 & \textbf{.3241} & .3310 & \underline{\textbf{.2939}} \\
 & 720 & .4162 & .3908 & .3884 & .3177 & .3453 & \underline{\textbf{.2873}} & .4617 & .4474 & .4473 & \textbf{.3650} & .3885 & .3867 \\
 & Imp. & - & +2.47\% & +2.32\% & +9.05\% & +6.78\% & \underline{\textbf{+31.11\%}} & - & +3.53\% & +2.83\% & \textbf{+11.17\%} & +9.75\% & \underline{\textbf{+23.19\%}} \\
\midrule
\multirow{5}{*}{ETTm1} & 96 & .3710 & .3506 & .3536 & .3454 & .3475 & \underline{\textbf{.2553}} & .4354 & .3951 & .3951 & .3837 & \textbf{.3831} & \underline{\textbf{.3202}} \\
 & 192 & .4439 & .4160 & .4184 & .4115 & .4119 & \underline{\textbf{.3544}} & .4855 & .4566 & .4574 & \textbf{.4476} & .4514 & \underline{\textbf{.4177}} \\
 & 336 & .5182 & .4787 & .4792 & .4748 & .4749 & \underline{\textbf{.4120}} & .5556 & .5108 & .5082 & \textbf{.4832} & .5054 & .5111 \\
 & 720 & .5922 & .5478 & .5522 & \textbf{.4763} & .5007 & .4904 & .6212 & .5756 & .5778 & \textbf{.5029} & .5225 & .5499 \\
 & Imp. & - & +6.73\% & +6.18\% & +10.54\% & +9.34\% & \underline{\textbf{+22.25\%}} & - & +7.65\% & +7.64\% & \textbf{+12.94\%} & +10.99\% & \underline{\textbf{+14.98\%}} \\
\midrule
\multirow{5}{*}{ETTm2} & 96 & .1602 & .1590 & .1589 & .1582 & .1586 & \underline{\textbf{.1206}} & .1710 & .1711 & .1730 & \textbf{.1702} & .1704 & \underline{\textbf{.1106}} \\
 & 192 & .1936 & .1921 & .1919 & .1906 & .1907 & \underline{\textbf{.1758}} & .2121 & \textbf{.2102} & .2126 & \textbf{.2102} & .2120 & \underline{\textbf{.1641}} \\
 & 336 & .2331 & .2299 & .2302 & .2131 & .2226 & \underline{\textbf{.1816}} & .2530 & .2501 & .2520 & \textbf{.2337} & .2351 & \underline{\textbf{.1900}} \\
 & 720 & .3066 & .2986 & .2971 & \textbf{.2171} & .2349 & .2548 & .3327 & .3220 & .3131 & \textbf{.2477} & .2643 & .2600 \\
 & Imp. & - & +1.38\% & +1.51\% & +10.14\% & +7.60\% & \underline{\textbf{+18.22\%}} & - & +1.30\% & +1.22\% & \textbf{+8.64\%} & +7.01\% & \underline{\textbf{+26.18\%}} \\
\midrule
\multirow{5}{*}{Exchange} & 96 & .0814 & .0792 & .0798 & .0756 & .0773 & \underline{\textbf{.0625}} & .1151 & .1087 & .1146 & \textbf{.0955} & .1008 & \underline{\textbf{.0780}} \\
 & 192 & .1727 & .1658 & .1653 & .1393 & .1457 & \underline{\textbf{.1202}} & .2150 & .2198 & .1999 & \textbf{.1663} & .1722 & \underline{\textbf{.1450}} \\
 & 336 & .3226 & .2877 & .2898 & .2020 & .2323 & \underline{\textbf{.1549}} & .3950 & .3047 & .3100 & \textbf{.2119} & .2660 & \underline{\textbf{.1900}} \\
 & 720 & .8366 & .8138 & .8149 & \textbf{.3444} & .4541 & .4276 & 1.0259 & .7191 & .7805 & \textbf{.3871} & .4815 & .4850 \\
 & Imp. & - & +5.06\% & +4.75\% & +30.67\% & +23.60\% & \underline{\textbf{+38.62\%}} & - & +14.02\% & +13.22\% & \textbf{+37.08\%} & +29.51\% & \underline{\textbf{+42.35\%}} \\
\midrule
\multirow{5}{*}{Weather} & 96 & .1957 & .1807 & .1795 & .1772 & .1803 & \underline{\textbf{.1192}} & .1757 & .1853 & .1970 & \textbf{.1636} & .1651 & \underline{\textbf{.1200}} \\
 & 192 & .2404 & .2244 & .2274 & .2223 & .2237 & \underline{\textbf{.1623}} & .2237 & .2161 & .2265 & \textbf{.2082} & .2120 & \underline{\textbf{.1640}} \\
 & 336 & .2921 & .2714 & .2748 & .2551 & .2642 & \underline{\textbf{.1915}} & .2812 & .2746 & .2788 & \textbf{.2729} & .2737 & \underline{\textbf{.1930}} \\
 & 720 & .3644 & .3466 & .3493 & .2579 & .2708 & \underline{\textbf{.2373}} & .3508 & .3573 & .3681 & \textbf{.2582} & .2855 & \underline{\textbf{.2380}} \\
 & Imp. & - & +6.57\% & +5.94\% & +14.72\% & +12.51\% & \underline{\textbf{+35.22\%}} & - & -0.39\% & -4.36\% & \textbf{+10.79\%} & +8.14\% & \underline{\textbf{+30.48\%}} \\
\bottomrule
\end{tabular}}
\end{table*}

\begin{table*}[!tbp]
\centering
\caption{Shared-checkpoint component ablation averaged over six datasets with DLinear. Each dataset-horizon pair trains one full STEPS solver; ablated variants are evaluated by disabling components at test time. Each entry reports average MSE with average relative improvement over the corresponding zero-shot backbone in parentheses.}
\label{tab:component_ablation_rfmd}
\scriptsize
\setlength{\tabcolsep}{2.5pt}
\renewcommand{\arraystretch}{0.84}
\resizebox{\textwidth}{!}{
\begin{tabular}{lcccccc}
\toprule
\textbf{Horizon} & \textbf{Zero-Shot} & \textbf{Full STEPS} & \textbf{Local-only} & \textbf{Global-only} & \textbf{w/o Bounded Fusion} & \textbf{w/o Memory} \\
\midrule
$H=96$ & 0.2533 & 0.1822 (+29.32\%) & 0.2074 (+20.36\%) & 0.2191 (+12.63\%) & \textbf{0.1796 (+30.91\%)} & 0.1922 (+26.98\%) \\
$H=192$ & 0.3112 & 0.2433 (+23.08\%) & 0.2780 (+11.77\%) & 0.2688 (+14.16\%) & \textbf{0.2386 (+25.07\%)} & 0.2731 (+14.18\%) \\
$H=336$ & 0.3769 & 0.2730 (+29.32\%) & 0.3500 (+7.73\%) & 0.2931 (+23.62\%) & \textbf{0.2694 (+30.60\%)} & 0.3368 (+11.46\%) \\
$H=720$ & 0.5452 & 0.3782 (+28.38\%) & 0.5198 (+4.84\%) & 0.3979 (+24.72\%) & \textbf{0.3747 (+28.82\%)} & 0.4978 (+5.35\%) \\
\midrule
\textbf{Avg.} & 0.3716 & 0.2692 (+27.52\%) & 0.3388 (+11.17\%) & 0.2947 (+18.78\%) & \textbf{0.2656 (+28.85\%)} & 0.3250 (+14.49\%) \\
\bottomrule
\end{tabular}}
\end{table*}

\begin{table}[!tbp]
\centering
\caption{Prefix-contamination robustness on DLinear with $H=96$, averaged over five completed datasets (ETTh1, ETTh2, ETTm1, ETTm2, and Exchange). Each column reports MSE under a different outlier ratio in the visible prefix; lower is better. Deg. reports the average per-dataset relative MSE degradation from the clean-prefix setting over nonzero contamination ratios.}
\label{tab:prefix_outlier_robustness}
\scriptsize
\setlength{\tabcolsep}{2.5pt}
\renewcommand{\arraystretch}{0.84}
\begin{tabular}{lcccccc}
\toprule
Method & 0\% & 1\% & 5\% & 10\% & 20\% & Deg. \\
\midrule
Zero-Shot & 0.2649 & 0.2649 & 0.2649 & 0.2649 & 0.2649 & 0.00\% \\
TAFAS & 0.2367 & 0.2482 & 0.3048 & 0.3619 & 0.4866 & 34.09\% \\
PETSA & 0.3128 & 0.3337 & 0.4148 & 0.5003 & 0.6440 & 47.75\% \\
COSA & 0.2526 & 0.3657 & 0.4484 & 0.5275 & 0.6933 & 67.70\% \\
STEPS w/o Bound & 0.1921 & 0.1989 & 0.2302 & 0.2738 & 0.3661 & 33.99\% \\
\textbf{STEPS} & 0.1946 & \textbf{0.1988} & \textbf{0.2191} & \textbf{0.2465} & \textbf{0.3015} & \textbf{22.75}\% \\
\bottomrule
\end{tabular}
\end{table}

\begin{table*}[t]
\centering
\caption{Sparse-prefix locality and far-horizon preservation on DLinear with $H=96$ and $k=3$ revealed future points. The support region is $h=1{:}3$; Near MSE is computed on $h=4{:}27$, and Far MSE is computed on $h=73{:}96$. Positive percentages indicate MSE reduction over the zero-shot backbone.}
\label{tab:real_small_support_k3_only}
\scriptsize
\setlength{\tabcolsep}{2.5pt}
\renewcommand{\arraystretch}{0.84}
\resizebox{\textwidth}{!}{
\begin{tabular}{llcccc@{\hspace{9pt}}llcccc}
\toprule
\textbf{Dataset} & \textbf{Method} & \textbf{Near MSE} & \textbf{Near Imp.} & \textbf{Far MSE} & \textbf{Far Imp.}
& \textbf{Dataset} & \textbf{Method} & \textbf{Near MSE} & \textbf{Near Imp.} & \textbf{Far MSE} & \textbf{Far Imp.} \\
\midrule
\multirow{5}{*}{ETTh1} & Zero-Shot & 0.3929 & -- & 0.5288 & -- &
\multirow{5}{*}{ETTm2} & Zero-Shot & 0.1188 & -- & \textbf{0.1835} & -- \\
 & TAFAS & 0.5104 & -29.91\% & 0.6988 & -32.15\% &
 & TAFAS & 0.1108 & +6.73\% & 0.2075 & -13.08\% \\
 & PETSA & 0.4321 & -9.98\% & 0.6128 & -15.89\% &
 & PETSA & 0.1062 & +10.61\% & 0.1953 & -6.43\% \\
 & COSA & 0.5100 & -29.80\% & 0.6988 & -32.15\% &
 & COSA & 0.1108 & +6.73\% & 0.2075 & -13.08\% \\
 & \textbf{STEPS} & \textbf{0.3911} & +0.46\% & \textbf{0.5171} & +2.21\% &
 & \textbf{STEPS} & \textbf{0.1018} & +14.31\% & 0.1873 & -2.07\% \\
\midrule
\multirow{5}{*}{ETTh2} & Zero-Shot & 0.1696 & -- & 0.2946 & -- &
\multirow{5}{*}{Exchange} & Zero-Shot & 0.0359 & -- & 0.1556 & -- \\
 & TAFAS & 0.1815 & -7.02\% & 0.3331 & -13.07\% &
 & TAFAS & \textbf{0.0290} & +19.22\% & \textbf{0.1394} & +10.41\% \\
 & PETSA & 0.1621 & +4.42\% & 0.3074 & -4.34\% &
 & PETSA & 0.0345 & +3.90\% & 0.1408 & +9.51\% \\
 & COSA & 0.1815 & -7.02\% & 0.3331 & -13.07\% &
 & COSA & 0.0324 & +9.75\% & \textbf{0.1394} & +10.41\% \\
 & \textbf{STEPS} & \textbf{0.1577} & +7.02\% & \textbf{0.2788} & +5.36\% &
 & \textbf{STEPS} & 0.0333 & +7.24\% & 0.1459 & +6.23\% \\
\midrule
\multirow{5}{*}{ETTm1} & Zero-Shot & 0.3093 & -- & \textbf{0.3981} & -- &
\multirow{5}{*}{Weather} & Zero-Shot & 0.1336 & -- & 0.2411 & -- \\
 & TAFAS & 0.2800 & +9.47\% & 0.4655 & -16.93\% &
 & TAFAS & 0.1223 & +8.46\% & 0.2593 & -7.55\% \\
 & PETSA & 0.2740 & +11.41\% & 0.4367 & -9.70\% &
 & PETSA & 0.1273 & +4.72\% & 0.2462 & -2.12\% \\
 & COSA & 0.2799 & +9.51\% & 0.4655 & -16.93\% &
 & COSA & 0.1217 & +8.91\% & 0.2583 & -7.13\% \\
 & \textbf{STEPS} & \textbf{0.2644} & +14.52\% & 0.4007 & -0.65\% &
 & \textbf{STEPS} & \textbf{0.1212} & +9.28\% & \textbf{0.2342} & +2.86\% \\
\bottomrule
\end{tabular}}
\end{table*}

\begin{table}[t]
\centering
\caption{Sparse-anchor correction with limited revealed observations on DLinear ($H=96$), averaged over six datasets. The first 36 predicted steps form the support window; only 5--20\% positions are replaced by true observations, while the remaining positions keep zero-shot predictions. MSE is evaluated on $h=37{:}60$; parenthesized percentages report average per-dataset relative MSE reduction over zero-shot.}
\label{tab:sparse_anchor_limited}
\scriptsize
\setlength{\tabcolsep}{2.5pt}
\renewcommand{\arraystretch}{0.84}
\begin{tabular}{lccccc}
\toprule
\textbf{True anchors} & \textbf{Zero-Shot} & \textbf{TAFAS} & \textbf{PETSA} & \textbf{COSA} & \textbf{STEPS} \\
\midrule
5\% (2) & 0.2696 & 0.2590 (+4.56\%) & 0.2468 (+10.06\%) & 0.2822 (-5.10\%) & \textbf{0.2337 (+15.65\%)} \\
10\% (4) & 0.2696 & 0.2548 (+6.52\%) & 0.2470 (+10.04\%) & 0.2908 (-8.30\%) & \textbf{0.2400 (+12.85\%)} \\
20\% (7) & 0.2696 & 0.2484 (+9.42\%) & 0.2415 (+12.24\%) & 0.2445 (+11.84\%) & \textbf{0.2388 (+12.74\%)} \\
\bottomrule
\end{tabular}
\end{table}

\endgroup

\FloatBarrier

\bibliographystyle{unsrtnat}
\bibliography{rrrrr}

\clearpage
\appendix
The appendix is organized as follows. Appendix~A provides the mathematical derivation that supports the Dirichlet Boundary Value formulation used in the main paper. Appendix~B gives the algorithmic realization of STEPS. Appendix~C reports additional experimental analyses, including aggregate statistics, detailed robustness results, sensitivity studies, and module-only efficiency diagnostics.

\section{Dirichlet Boundary Value Interpretation}

This section provides the mathematical interpretation behind STEPS. The main paper describes the solver architecture through Figure~\ref{fig:overall_framework}; here we formalize why that architecture can be viewed as a boundary-constrained error-field estimation problem, and why a smooth correction field is a natural solution class. Figure~\ref{fig:appendix_dirichlet_view} gives the expanded version of the same pipeline: the frozen forecast defines the base trajectory, the revealed prefix fixes boundary values, the local solver approximates a low-energy Dirichlet extension, the global memory branch supplies long-range residual structure, and SMF produces the final bounded correction.

\subsection{Forecasting TTA as Boundary-Constrained Error Recovery}

Let the future prediction horizon be represented as a one-dimensional temporal graph
\begin{equation}
	\mathcal{G}_H=(\mathcal{V},\mathcal{E}), \qquad
	\mathcal{V}=\{1,\ldots,H\}, \qquad
	\mathcal{E}=\{(h,h+1)\}_{h=1}^{H-1}.
\end{equation}
This graph is the discrete temporal manifold on which predictions, targets, and forecasting errors are defined. A frozen backbone produces
\begin{equation}
	\hat{Y}=f_\theta(X), \qquad \theta \ \mathrm{frozen},
\end{equation}
and the unknown full-horizon error field is
\begin{equation}
	R=Y-\hat{Y}\in\mathbb{R}^{H\times d}.
\end{equation}
During test-time rollout, only the first $a$ future observations may be available. They induce the visible prefix error
\begin{equation}
	R_a=Y_{1:a}-\hat{Y}_{1:a}.
\end{equation}
Thus the observed prefix gives the value of the error field on a subset of nodes, while the remaining future error values are unknown. With
\begin{equation}
	B=\{1,\ldots,a\}, \qquad U=\{a+1,\ldots,H\},
\end{equation}
forecasting TTA can be written as estimating $R_U$ given the boundary values $R_B=R_a$. This is the discrete analogue of a Dirichlet Boundary Value Problem: recover an unknown field in a domain from its boundary values.

In the notation of Figure~\ref{fig:appendix_dirichlet_view}, the blue prefix nodes are not extra training labels for the backbone; they are boundary values of the output error field. The grey future nodes are the unknown interior values. STEPS therefore adapts by solving for a correction field in prediction space, not by changing the forecaster parameters.

\subsection{Dirichlet Energy and the Smooth Error Assumption}

The boundary values alone do not identify a unique future error field. A standard way to select a stable extension on a graph is to minimize the discrete Dirichlet energy
\begin{equation}
	\mathcal{E}_{\mathcal{G}}(R)
	=
	\frac{1}{2}
	\sum_{(i,j)\in\mathcal{E}}
	w_{ij}\|R_i-R_j\|_2^2,
	\label{eq:appendix_dirichlet_energy}
\end{equation}
subject to the observed boundary constraint $R_B=R_a$. On the one-dimensional temporal graph used by STEPS, this reduces to penalizing differences between adjacent horizon steps:
\begin{equation}
	\mathcal{E}_{\mathcal{G}}(R)
	=
	\frac{1}{2}
	\sum_{h=1}^{H-1}
	w_h\|R_{h+1}-R_h\|_2^2.
\end{equation}

\begin{proposition}
	Let $R^\star$ minimize Eq.~\eqref{eq:appendix_dirichlet_energy} under the boundary constraint $R_B=R_a$. For every unobserved node, the optimality condition is a graph-Laplacian equation
	\begin{equation}
		L_{UU}R^\star_U=-L_{UB}R_a,
		\label{eq:appendix_laplacian_system}
	\end{equation}
	where $L$ is the graph Laplacian partitioned over boundary nodes $B$ and unknown nodes $U$.
\end{proposition}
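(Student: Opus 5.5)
The plan is to write the constrained minimization as an unconstrained quadratic problem in the free variables $R_U$ and read off its first-order condition. First we express the energy with the weighted Laplacian. Let $W$ be the symmetric weight matrix with $W_{h,h+1}=W_{h+1,h}=w_h\ge 0$ and zero elsewhere, let $\Delta_W=\mathrm{diag}(W\mathbf{1})$ be its degree matrix, and set $L=\Delta_W-W$. The standard identity
\begin{equation}
	\frac{1}{2}\sum_{(i,j)\in\mathcal{E}} w_{ij}\|R_i-R_j\|_2^2=\frac{1}{2}\,\mathrm{tr}\!\left(R^\top L R\right)
\end{equation}
holds here because the sum runs over undirected edges, and each edge contributes $w_{ij}(R_i^\top R_i+R_j^\top R_j-2R_i^\top R_j)$ to $\mathrm{tr}(R^\top LR)$. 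The identity acts column by column, so the $d$ channels decouple and we may treat $R$ as an $H\times d$ matrix throughout.

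Next we partition $L$ over $B$ and $U$ and substitute the constraint $R_B=R_a$. The energy becomes
\begin{equation}
	\frac{1}{2}\mathrm{tr}\!\left(R_U^\top L_{UU}R_U\right)+\mathrm{tr}\!\left(R_U^\top L_{UB}R_a\right)+\frac{1}{2}\mathrm{tr}\!\left(R_a^\top L_{BB}R_a\right),
\end{equation}
where we used $L_{BU}=L_{UB}^\top$ to merge the two cross terms. This is an unconstrained quadratic in $R_U$. Setting its gradient with respect to $R_U$ to zero gives $L_{UU}R_U+L_{UB}R_a=0$, which is exactly Eq.~\eqref{eq:appendix_laplacian_system}. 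Row by row, this is the harmonic condition: at every interior node $u\in U$, the value $(LR^\star)_u$ vanishes.

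The main point needing care is that a minimizer exists and is characterized by this stationarity condition, which requires $L_{UU}\succeq 0$. Positive semidefiniteness follows because $L_{UU}$ is a principal submatrix of the PSD matrix $L$. Since the objective is then convex, any $R^\star$ minimizing the energy satisfies the gradient condition, and this is all the proposition asserts. For completeness we would also note when the solution is unique, which needs $L_{UU}$ to be invertible. On the path graph, take $a\ge 1$ and $w_h>0$. Then $L_{UU}$ is the Laplacian of the chain on $U$ plus the grounding term $w_a$ at node $a+1$, coming from the edge $(a,a+1)$ into $B$. It is irreducibly diagonally dominant with one strictly dominant row, hence positive definite, so $R^\star_U=-L_{UU}^{-1}L_{UB}R_a$.

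We would remark that $L_{UB}$ has a single nonzero entry, $-w_a$ at position $(a+1,a)$. In the unweighted case the solution is therefore the constant extension $R^\star_h=R_a$ for $h>a$, which links this exact harmonic extension to the regularized solver $P=(D^\top D+\alpha I)^{-1}$ used by the local branch.
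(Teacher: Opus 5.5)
Your proposal is correct and takes essentially the paper's route (write the energy as $\frac{1}{2}\mathrm{tr}(R^\top L R)$, partition $L$ over $B$ and $U$, fix $R_B=R_a$, and set the derivative with respect to $R_U$ to zero to obtain $L_{UU}R_U+L_{UB}R_a=0$), and it adds the positive-semidefiniteness, existence, and uniqueness details that the paper omits. Two small refinements: first, the stationarity condition is necessary simply because the objective is differentiable and unconstrained in $R_U$, and convexity is only needed for the converse; second, the constant extension by the last boundary row holds for any positive weights, not just the unweighted case, because every row of $L_{UU}$ sums to zero except the row of node $a+1$, which sums to $w_a$ (note that the paper uses $R_a$ for the whole prefix block, not its last row).
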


\begin{proof}
	For fixed boundary values, only $R_U$ is optimized. The Dirichlet energy can be written as $\mathcal{E}_{\mathcal{G}}(R)=\frac{1}{2}\mathrm{tr}(R^\top L R)$. Differentiating with respect to $R_U$ gives $L_{UU}R_U+L_{UB}R_B=0$. Substituting $R_B=R_a$ yields Eq.~\eqref{eq:appendix_laplacian_system}.
\end{proof}

This result gives the mathematical origin of the smooth error assumption used by STEPS. A low-energy solution suppresses unnecessary variation between neighboring horizon steps; equivalently, the unknown error field is selected as a smooth extension of the observed prefix boundary. STEPS does not claim that every forecasting residual is globally harmonic. Rather, it uses the Dirichlet principle as a stable local model for the residual component that can be inferred from limited test-time evidence.

\begin{figure*}[!t]
\centering
\includegraphics[width=\textwidth]{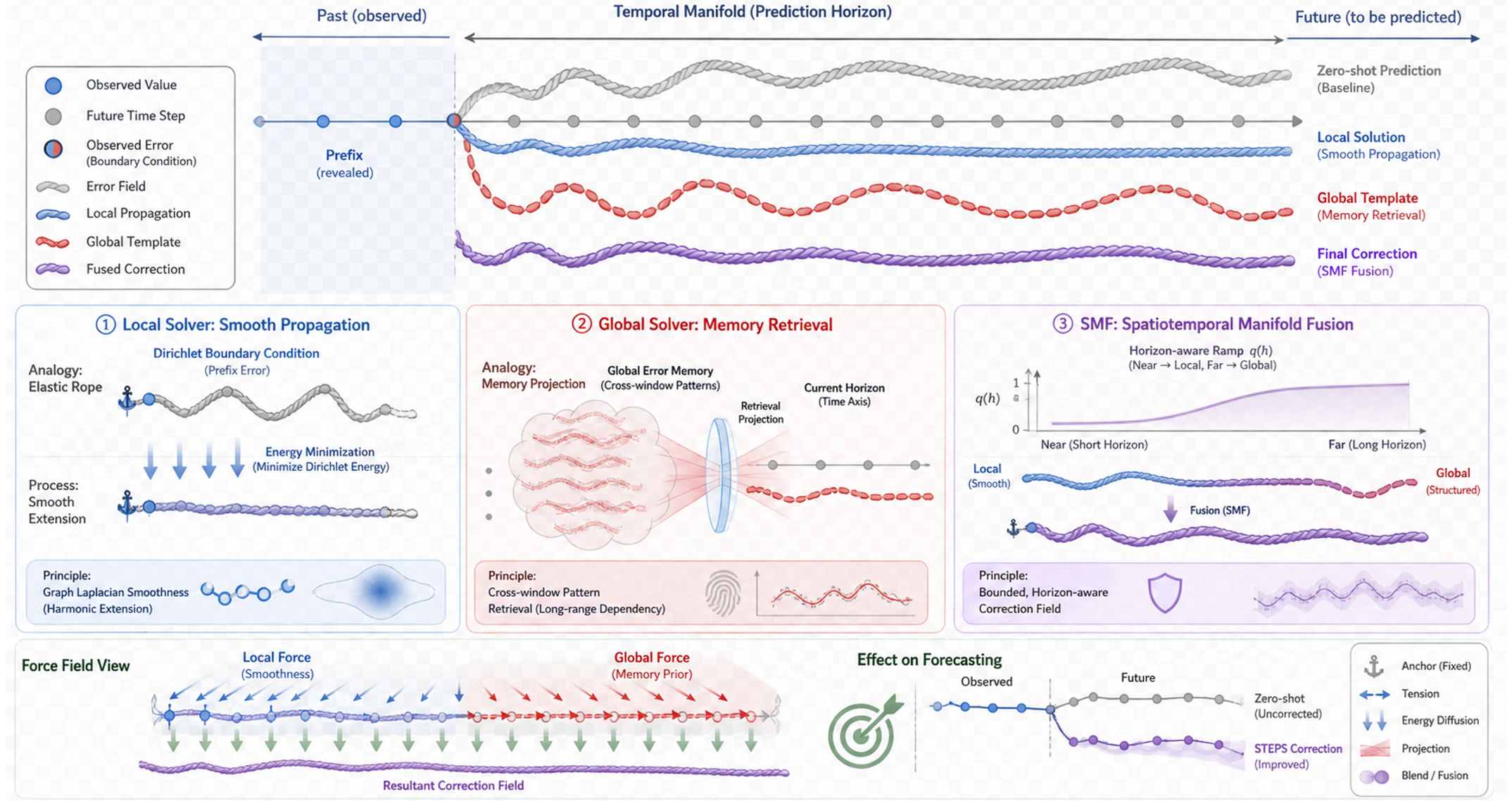}
\caption{Expanded Dirichlet Boundary Value interpretation of STEPS. The frozen backbone gives a base prediction, the revealed prefix error fixes boundary values on the prediction-horizon manifold, Local Solver: Smooth Propagation approximates the low-energy extension supported by nearby boundary evidence, Global Solver: Memory Retrieval supplies recurring long-range residual patterns, and Spatiotemporal Manifold Fusion (SMF) forms the bounded correction field added to the frozen forecast.}
\label{fig:appendix_dirichlet_view}
\end{figure*}

\FloatBarrier
\subsection{Why STEPS Uses Local Propagation, Memory, and Bounded Fusion}

The ideal Dirichlet extension in Eq.~\eqref{eq:appendix_laplacian_system} explains how boundary evidence propagates through the temporal graph, but practical forecasting TTA has three additional constraints that correspond to the three non-backbone modules in Figure~\ref{fig:appendix_dirichlet_view}. First, the observed boundary may be short and noisy, so the solver must select reliable prefix evidence rather than copying every visible fluctuation. Second, distant horizons are weakly constrained by the current prefix, so a purely local low-energy extension may miss recurring long-range error templates. Third, boundary perturbations should not be amplified into an unbounded correction field.

STEPS can therefore be understood as a bounded approximate solver for the boundary value problem. Prefix Boundary Selection defines the boundary $R_a$. Local Smooth Propagation computes a low-variation response with the temporal transfer operator
\begin{equation}
	P=(D^\top D+\alpha I)^{-1},
\end{equation}
where $D^\top D$ is the Laplacian of the one-dimensional temporal chain. This is the module in Figure~\ref{fig:appendix_dirichlet_view} that most directly implements the low-Dirichlet-energy prior. Global Error Memory supplies a data-driven prior for the part of the future error field that is not determined by the current prefix, and the decoder conditions this prior on the current forecast, masked prefix, local response, memory state, and recent context. SMF then combines both responses as
\begin{equation}
	\Delta
	=
	\mathrm{clip}
	\left(
	\Delta_{\mathrm{short}}
	+
	\gamma q(h)\Delta_{\mathrm{long}},
	-c,c
	\right).
\end{equation}
The ramp $q(h)=\sigma(\kappa(h/H-\tau))$ is increasing in the horizon index. Thus near-horizon corrections are dominated by $\Delta_{\mathrm{short}}$, which is most trustworthy close to the boundary, while far-horizon corrections receive more contribution from $\Delta_{\mathrm{long}}$, where memory is needed to recover structure not determined by the prefix. This formulation preserves the boundary-value interpretation while replacing an expensive or underdetermined exact solve with an online correction rule.

The bounded fusion also gives a simple stability property. If the prefix boundary is perturbed by $\epsilon$, a linear local propagation operator changes the response by $P\epsilon$, so the propagated perturbation is controlled by $\|P\|\|\epsilon\|$. Without an output bound, large prefix anomalies may still produce large corrections. With SMF clipping, the final correction satisfies
\begin{equation}
	\|\Delta\|_{\infty}\le c,
\end{equation}
which prevents sparse or contaminated boundary evidence from causing unbounded output-space adaptation.

\section{Algorithmic Realization of STEPS}

Algorithm~\ref{alg:steps} gives the implementation-level procedure corresponding to the mathematical view above. The backbone is frozen throughout the test-time process; adaptation is performed only by constructing and applying the correction field in prediction space.

\begin{algorithm}[htbp]
\caption{STEPS: Smooth Temporal Error Propagation Solver}
\label{alg:steps}
\begin{algorithmic}[1]
\Require Frozen forecaster $f_\theta$, input window $X$, revealed prefix $Y_{1:a}$, Global Error Memory $M_t$
\Ensure Adapted prediction $\hat{Y}_{\mathrm{STEPS}}$
\State Compute zero-shot prediction $\hat{Y}=f_\theta(X)$ with frozen $\theta$
\State Construct prefix error $R_a=Y_{1:a}-\hat{Y}_{1:a}$
\State Select the prefix boundary length and form $(\tilde{R},m)$ from $R_a$
\State Remove low-frequency drift from $R_a$ and obtain $R_{\mathrm{fast}}$
\State Compute local smooth response $\Delta_{\mathrm{harm}}=P_{:,1:a}R_{\mathrm{fast}}$, where $P=(D^\top D+\alpha I)^{-1}$
\State Construct the bias field $\Delta_{\mathrm{bias}}$ and solve the prefix ridge fit to obtain $\Delta_{\mathrm{short}}$
\State Decode global response $\Delta_{\mathrm{long}}=g_\phi(\hat{Y},\Delta_{\mathrm{short}},\tilde{R},m,M_t,z_t)$
\State Fuse local and global responses:
\[
	\Delta=\mathrm{clip}
	\left(
	\Delta_{\mathrm{short}}+\gamma q(h)\Delta_{\mathrm{long}},
	-c,c
	\right)
\]
\State Return $\hat{Y}_{\mathrm{STEPS}}=\hat{Y}+\Delta$
\State After the corresponding targets are observed, update $M_t=\rho M_{t-1}+(1-\rho)\bar{R}_t$
\end{algorithmic}
\end{algorithm}

\section{Additional Experiments and Diagnostics}
The following experiments provide additional diagnostics for the STEPS design. They are kept in the appendix because they analyze implementation choices, parameter sensitivity, normalization compatibility, and module-only online efficiency rather than serving as the primary empirical claims.

\subsection{Additional Aggregate Analysis of Table~\ref{tab:main_results_4_backbones_steps}}

This subsection re-aggregates Table~\ref{tab:main_results_4_backbones_steps} from complementary perspectives. No new MSE values are introduced here; all statistics are computed directly from the main table. We focus on aggregate views where STEPS is the strongest method and avoid repeating diagnostics already shown in the main text.

\begin{table}[!t]
\centering
\caption{Horizon-wise average relative MSE improvement over the zero-shot backbone, aggregated from Table~\ref{tab:main_results_4_backbones_steps}. We report horizons where STEPS obtains the largest average improvement, together with the overall average across all 96 settings.}
\label{tab:table1_horizon_aggregate}
\footnotesize
\setlength{\tabcolsep}{3.6pt}
\renewcommand{\arraystretch}{0.86}
\begin{tabular}{lccccc}
\toprule
\textbf{Horizon} & \textbf{TAFAS} & \textbf{PETSA} & \textbf{COSA-F} & \textbf{COSA-P} & \textbf{STEPS} \\
\midrule
96 & +2.65\% & +1.95\% & +5.94\% & +5.31\% & \textbf{+28.95\%} \\
192 & +2.98\% & +2.76\% & +7.62\% & +6.78\% & \textbf{+23.39\%} \\
336 & +4.96\% & +4.51\% & +13.65\% & +10.34\% & \textbf{+27.77\%} \\
\midrule
Overall & +3.95\% & +3.53\% & +14.05\% & +11.71\% & \textbf{+26.82\%} \\
\bottomrule
\end{tabular}
\end{table}

\begin{figure}[!t]
\centering
\includegraphics[width=\linewidth]{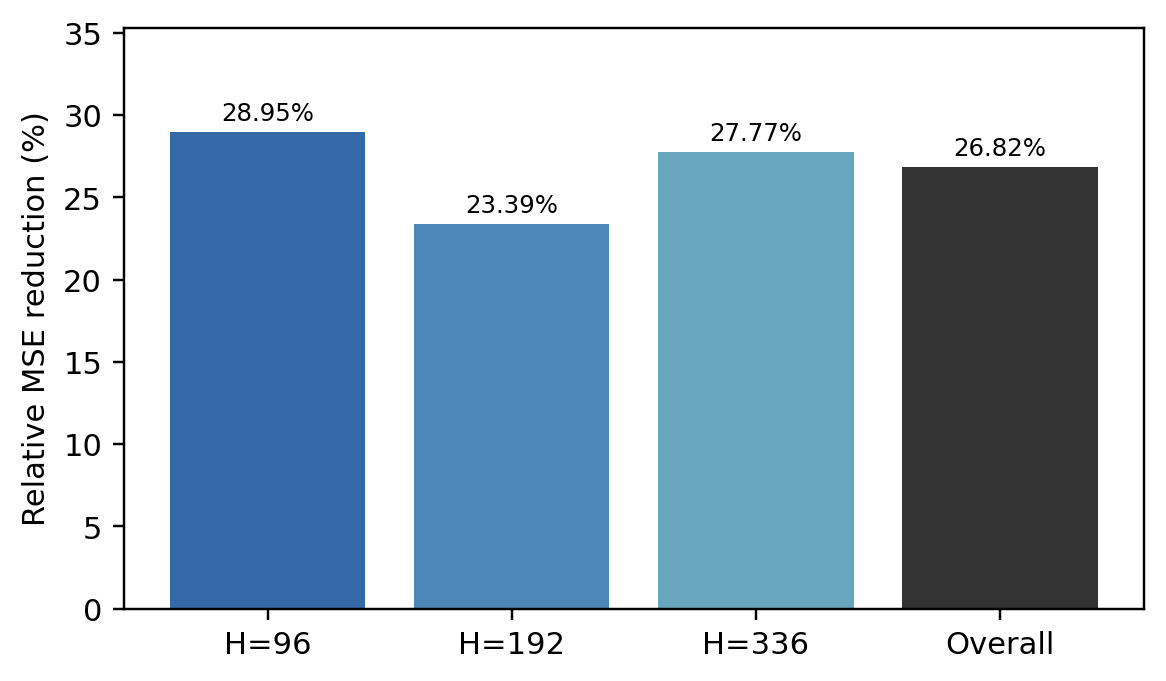}
\caption{Aggregate improvement corresponding to Table~\ref{tab:table1_horizon_aggregate}. The plot shows horizons where STEPS obtains the largest average MSE reduction ($H=96$, $H=192$, and $H=336$), together with the overall average across all 96 settings.}
\label{fig:table1_horizon_aggregate_bar}
\end{figure}

\begin{figure*}[!t]
\centering
\begin{minipage}{0.48\textwidth}
\centering
\includegraphics[width=\linewidth]{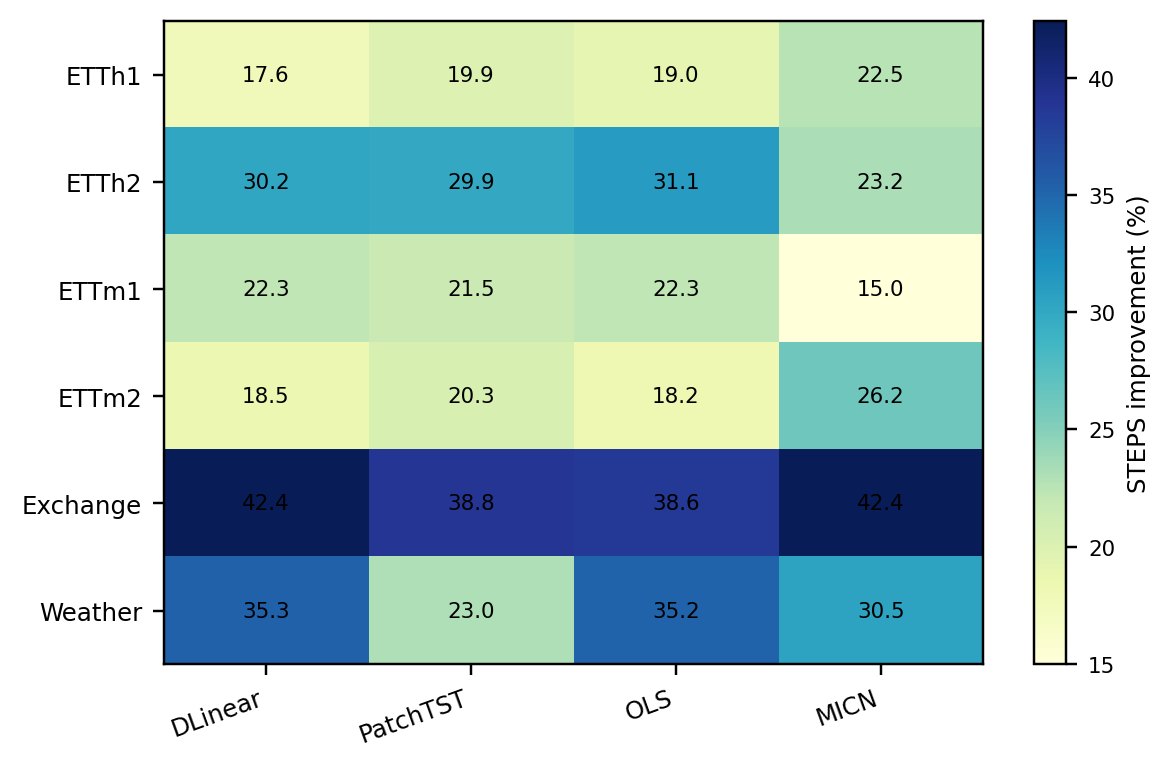}
\end{minipage}
\hfill
\begin{minipage}{0.48\textwidth}
\centering
\includegraphics[width=\linewidth]{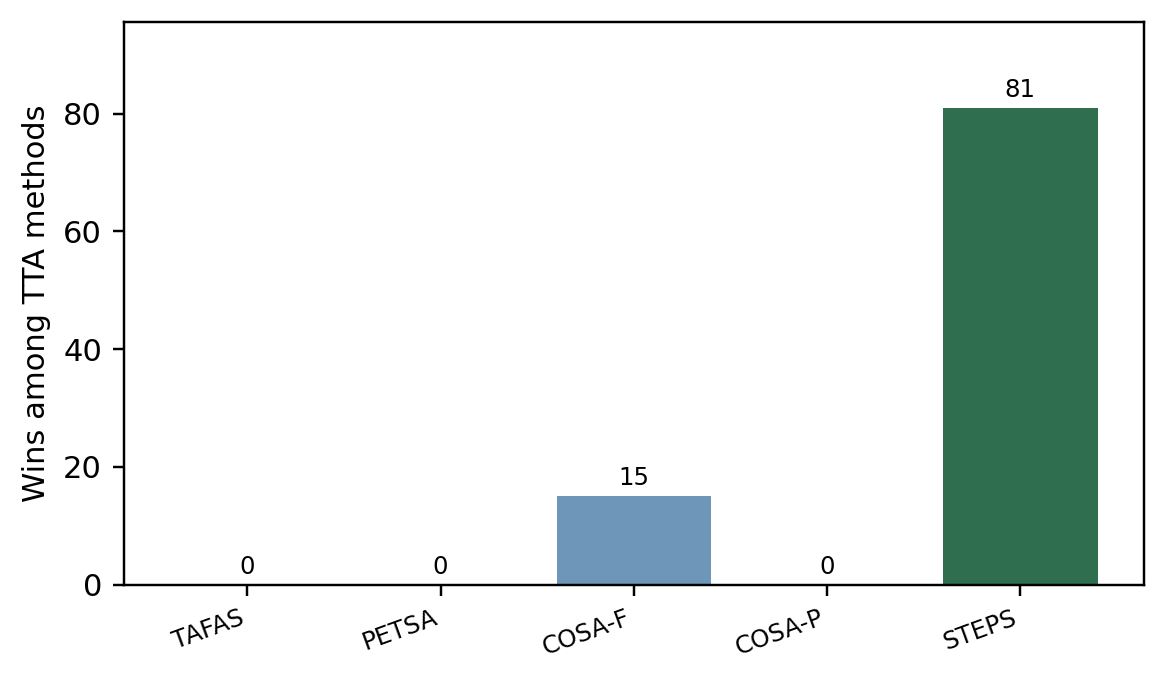}
\end{minipage}
\caption{Dataset/backbone and win-count views of Table~\ref{tab:main_results_4_backbones_steps}. Left: average STEPS improvement by dataset and frozen backbone. Right: number of settings where each TTA method obtains the lowest MSE among the compared TTA methods.}
\label{fig:appendix_table1_heatmap_wins}
\end{figure*}

\begin{table}[!t]
\centering
\caption{Average relative MSE improvement over the zero-shot backbone by frozen backbone, computed from Table~\ref{tab:main_results_4_backbones_steps}. Gap reports STEPS minus the strongest compared TTA baseline under the same backbone.}
\label{tab:appendix_table1_backbone_summary}
\footnotesize
\setlength{\tabcolsep}{3.0pt}
\renewcommand{\arraystretch}{0.88}
\begin{tabular}{lcccccc}
\toprule
Backbone & TAFAS & PETSA & COSA-F & COSA-P & STEPS & Gap \\
\midrule
DLinear & 3.95\% & 3.90\% & 13.57\% & 11.18\% & \textbf{27.71\%} & 14.14 pp \\
PatchTST & 2.31\% & 1.89\% & 13.59\% & 11.18\% & \textbf{25.57\%} & 11.98 pp \\
OLS & 4.16\% & 4.02\% & 13.45\% & 11.11\% & \textbf{27.40\%} & 13.96 pp \\
MICN & 5.39\% & 4.34\% & 15.60\% & 13.38\% & \textbf{26.62\%} & 11.01 pp \\
\bottomrule
\end{tabular}
\end{table}

\begin{table}[!t]
\centering
\caption{Dataset-wise Table~\ref{tab:main_results_4_backbones_steps} summary. STEPS Imp. is the average relative MSE improvement over the zero-shot backbone across four horizons and four frozen backbones. Wins counts the number of dataset-specific settings where STEPS attains the lowest MSE among TTA methods.}
\label{tab:appendix_table1_dataset_summary}
\footnotesize
\setlength{\tabcolsep}{3.2pt}
\renewcommand{\arraystretch}{0.88}
\begin{tabular}{lcccc}
\toprule
Dataset & STEPS Imp. & Best Baseline & Gap & Wins \\
\midrule
ETTh1 & \textbf{19.75\%} & COSA-P (8.90\%) & 10.85 pp & 16/16 \\
ETTh2 & \textbf{28.60\%} & COSA-F (9.50\%) & 19.10 pp & 15/16 \\
ETTm1 & \textbf{20.24\%} & COSA-F (11.16\%) & 9.08 pp & 11/16 \\
ETTm2 & \textbf{20.79\%} & COSA-F (9.56\%) & 11.23 pp & 12/16 \\
Exchange & \textbf{40.55\%} & COSA-F (33.43\%) & 7.12 pp & 12/16 \\
Weather & \textbf{31.01\%} & COSA-F (13.37\%) & 17.64 pp & 15/16 \\
\bottomrule
\end{tabular}
\end{table}

\begin{table}[!t]
\centering
\caption{Horizon-wise win count and average rank over Table~\ref{tab:main_results_4_backbones_steps}. Wins are counted among the five TTA methods at each horizon; rank is computed among TTA methods only, with lower values better.}
\label{tab:appendix_table1_rank_summary}
\footnotesize
\setlength{\tabcolsep}{3.4pt}
\renewcommand{\arraystretch}{0.88}
\begin{tabular}{lcccccc}
\toprule
Method & W@96 & W@192 & W@336 & W@720 & Avg. Rank & Avg. Imp. \\
\midrule
TAFAS & 0 & 0 & 0 & 0 & 4.26 & 3.95\% \\
PETSA & 0 & 0 & 0 & 0 & 4.65 & 3.53\% \\
COSA-F & 0 & 0 & 1 & 14 & 2.10 & 14.05\% \\
COSA-P & 0 & 0 & 0 & 0 & 2.73 & 11.71\% \\
\textbf{STEPS} & \textbf{24} & \textbf{24} & \textbf{23} & \textbf{10} & \textbf{1.26} & \textbf{26.82\%} \\
\bottomrule
\end{tabular}
\end{table}

The aggregate statistics show that STEPS is the strongest method on average across datasets and backbones. The horizon-wise summary further separates this trend: STEPS wins every completed setting at $H=96$ and $H=192$, remains dominant at $H=336$, and is still competitive at $H=720$, where several datasets favor stronger long-range baselines. This pattern is consistent with the proposed boundary-value view: when the visible prefix provides reliable local evidence, STEPS extracts the strongest correction signal; as the horizon becomes very distant, global memory and bounded fusion keep the method stable rather than allowing aggressive over-correction.

\subsection{Additional Backbone Extension}

To test whether the output-space solver remains useful beyond the four main backbones, we add three recent forecasting architectures to the local benchmark: iTransformer, SOFTS, and TimeMixer. This diagnostic is intentionally lightweight and is not used for the main empirical claim: each new backbone is trained for one checkpoint epoch on ETTh1, and STEPS is evaluated with the same quick adaptation budget used in the smoke tests. Table~\ref{tab:appendix_new_backbone_extension} shows that STEPS consistently reduces both MSE and MAE across all tested horizons, including the added $H=192$ setting.

\begin{table*}[!t]
\centering
\caption{Additional backbone extension on ETTh1. Each row reports zero-shot MSE/MAE, STEPS MSE/MAE, and relative MSE reduction. The experiment is a lightweight appendix diagnostic with one checkpoint epoch and one STEPS fitting epoch.}
\label{tab:appendix_new_backbone_extension}
\scriptsize
\setlength{\tabcolsep}{3.0pt}
\renewcommand{\arraystretch}{0.88}
\resizebox{\textwidth}{!}{
\begin{tabular}{llccccc}
\toprule
Backbone & Horizon & Zero-Shot MSE & STEPS MSE & MSE Imp. & Zero-Shot MAE & STEPS MAE \\
\midrule
\multirow{4}{*}{iTransformer} & 96  & 0.5183 & 0.4768 & 8.01\% & 0.5012 & 0.4766 \\
 & 192 & 0.5854 & 0.5555 & 5.10\% & 0.5417 & 0.5244 \\
 & 336 & 0.6683 & 0.6445 & 3.55\% & 0.5850 & 0.5724 \\
 & 720 & 0.8802 & 0.8496 & 3.47\% & 0.6938 & 0.6835 \\
\midrule
\multirow{4}{*}{SOFTS} & 96  & 0.4838 & 0.4465 & 7.70\% & 0.4775 & 0.4554 \\
 & 192 & 0.5462 & 0.5185 & 5.09\% & 0.5197 & 0.5041 \\
 & 336 & 0.6174 & 0.5934 & 3.89\% & 0.5620 & 0.5498 \\
 & 720 & 0.8038 & 0.7723 & 3.92\% & 0.6628 & 0.6524 \\
\midrule
\multirow{4}{*}{TimeMixer} & 96  & 1.6462 & 1.5510 & 5.78\% & 0.8854 & 0.8527 \\
 & 192 & 1.6081 & 1.5323 & 4.72\% & 0.8927 & 0.8667 \\
 & 336 & 3.6526 & 3.4951 & 4.31\% & 1.2738 & 1.2496 \\
 & 720 & 2.0586 & 2.0331 & 1.24\% & 1.0259 & 1.0225 \\
\bottomrule
\end{tabular}}
\end{table*}

The gains are smaller than the fully tuned main-table results, which is expected because this appendix run uses short backbone training and a minimal STEPS fitting budget. Even under this limited setting, the correction remains directionally stable: improvements are strongest at short and medium horizons, while $H=720$ still benefits but leaves less room for prefix-driven correction.

\subsection{Reproducibility, Compute, and Responsible Release Details}

\paragraph{Compute resources.}
Unless otherwise stated, the reported experiments were run with PyTorch 2.5.1 and CUDA 12.1 on a single NVIDIA GeForce RTX 4090 GPU with 24GB memory. The main sweeps are executed sequentially over dataset, horizon, and frozen-backbone settings. After a frozen checkpoint is available, the STEPS fitting and evaluation stage usually takes minutes per setting; end-to-end wall-clock time is dominated by backbone checkpoint training and varies with the backbone architecture and horizon. The module-only runtime in Table~\ref{tab:latency_safety_rfmd} isolates the online adaptation overhead after the frozen backbone prediction.

\paragraph{Implementation details and hyperparameters.}
Table~\ref{tab:appendix_steps_hyperparameters} summarizes the STEPS hyperparameters used across experiments unless a diagnostic explicitly changes them. The same values are shared across datasets, horizons, and frozen backbones. The global decoder is trained with AdamW, learning rate $5\times 10^{-4}$, weight decay $10^{-4}$, gradient clipping at 1.0, 5 fitting epochs, and at most 16 training batches per setting in the main sweep. The frozen forecasting backbones use the dataset splits and checkpoint settings described in the experimental setup.

\begin{table}[!t]
\centering
\caption{Default STEPS hyperparameters used in the main experiments.}
\label{tab:appendix_steps_hyperparameters}
\footnotesize
\setlength{\tabcolsep}{4.5pt}
\renewcommand{\arraystretch}{0.9}
\begin{tabular}{llc}
\toprule
Component & Hyperparameter & Value \\
\midrule
Prefix selection & Minimum prefix support & 2 \\
Local solver & Smoothness $\alpha$ & 0.15 \\
Local solver & Ridge coefficient & 0.03 \\
Local solver & Basis coefficient clip & 0.5 \\
Local solver & Local response mix & 0.55 \\
Global memory & Context size & 8 \\
Global memory & Hidden dimension & 256 \\
Global memory & Memory decay $\rho$ & 0.5 \\
Global memory & Global response scale & 1.5 \\
SMF & Global mixing $\gamma$ & 0.7 \\
SMF & Ramp midpoint $\tau$ & 0.25 \\
SMF & Ramp sharpness $\kappa$ & 8.0 \\
SMF & Final correction clip & 2.5 \\
\bottomrule
\end{tabular}
\end{table}

\paragraph{Statistical reporting.}
The main tables report single-run results under the standard long-horizon forecasting protocol, which is common in this benchmark family because the full grid spans six datasets, four horizons, four frozen backbones, and multiple TTA baselines. To check whether the STEPS fitting stage is sensitive to random initialization, we additionally run a representative three-seed diagnostic on ETTh1 with a fixed DLinear backbone checkpoint. Only the STEPS global decoder is retrained across seeds, while the frozen backbone, dataset split, prefix protocol, and hyperparameters are kept unchanged. Table~\ref{tab:seed_stability_etth1_dlinear} reports mean $\pm$ standard deviation over the three seeds. The variance is small and the mean performance remains close to the corresponding single-run results used elsewhere in the appendix.

\begin{table}[!t]
\centering
\caption{Representative seed-stability diagnostic on ETTh1/DLinear. STEPS is evaluated over three random seeds while keeping the frozen backbone checkpoint fixed. Imp. denotes relative MSE reduction over the zero-shot forecast.}
\label{tab:seed_stability_etth1_dlinear}
\footnotesize
\setlength{\tabcolsep}{5pt}
\renewcommand{\arraystretch}{0.95}
\begin{tabular}{ccccc}
\toprule
Horizon & Zero-shot MSE & STEPS MSE & STEPS MAE & Imp. \\
\midrule
96  & 0.4695 & $0.3753 \pm 0.0001$ & $0.4084 \pm 0.0001$ & 20.06\% \\
192 & 0.5213 & $0.4366 \pm 0.0006$ & $0.4540 \pm 0.0004$ & 16.26\% \\
\bottomrule
\end{tabular}
\end{table}

\paragraph{Broader impact and deployment caution.}
STEPS may improve reliability in forecasting systems deployed under temporal distribution shift, including energy management, weather-aware operations, industrial monitoring, and financial time-series analysis. At the same time, more adaptive forecasts can still be wrong under severe shifts, poor sensors, or adversarially contaminated observations. For safety-critical, financial, or infrastructure decisions, STEPS should be used with monitoring, uncertainty checks, and human oversight rather than as an autonomous decision mechanism.

\paragraph{Existing assets, licenses, and anonymized release.}
All datasets and forecasting backbones used in the experiments are publicly available and are cited in the main paper. The supplementary archive is prepared as an anonymized research artifact with a README, requirements file, reproduction scripts, and license/notice files; local machine paths and author-identifying repository links are removed from release metadata. No new dataset or high-risk model is introduced by this work.

\FloatBarrier
\subsection{Detailed Shared-Checkpoint Ablation Results}

Table~\ref{tab:component_ablation_detail} expands the shared-checkpoint component ablation by dataset and horizon. For each dataset-horizon pair, the full STEPS solver is trained once, and all variants are evaluated by disabling components at test time.

\begin{table*}[!t]
\centering
\caption{Detailed shared-checkpoint component ablation on DLinear. Each entry reports MSE with relative improvement over the corresponding zero-shot backbone in parentheses.}
\label{tab:component_ablation_detail}
\tiny
\setlength{\tabcolsep}{1.8pt}
\renewcommand{\arraystretch}{0.78}
\resizebox{\textwidth}{!}{
\begin{tabular}{llcccccc}
\toprule
\textbf{Dataset} & \textbf{H} & \textbf{Zero-Shot} & \textbf{Full STEPS} & \textbf{Local-only} & \textbf{Global-only} & \textbf{w/o Bound} & \textbf{w/o Memory} \\
\midrule
\multirow{4}{*}{ETTh1} & 96 & 0.4695 & 0.3755 (+20.03\%) & 0.4371 (+6.91\%) & 0.4029 (+14.19\%) & 0.3722 (+20.73\%) & 0.4297 (+8.49\%) \\
 & 192 & 0.5213 & 0.4389 (+15.81\%) & 0.5011 (+3.87\%) & 0.4580 (+12.14\%) & 0.4367 (+16.23\%) & 0.5064 (+2.85\%) \\
 & 336 & 0.5659 & 0.4719 (+16.61\%) & 0.5518 (+2.49\%) & 0.4878 (+13.81\%) & 0.4736 (+16.32\%) & 0.5429 (+4.06\%) \\
 & 720 & 0.7117 & 0.5792 (+18.62\%) & 0.7011 (+1.50\%) & 0.5940 (+16.55\%) & 0.5818 (+18.26\%) & 0.8354 (-17.37\%) \\
\midrule
\multirow{4}{*}{ETTh2} & 96 & 0.2323 & 0.1568 (+32.51\%) & 0.1823 (+21.54\%) & 0.1914 (+17.62\%) & 0.1518 (+34.65\%) & 0.1586 (+31.75\%) \\
 & 192 & 0.2862 & 0.2054 (+28.23\%) & 0.2489 (+13.03\%) & 0.2299 (+19.67\%) & 0.2006 (+29.89\%) & 0.2296 (+19.77\%) \\
 & 336 & 0.3252 & 0.2299 (+29.30\%) & 0.2953 (+9.21\%) & 0.2501 (+23.10\%) & 0.2250 (+30.82\%) & 0.2834 (+12.86\%) \\
 & 720 & 0.4087 & 0.2908 (+28.86\%) & 0.3827 (+6.37\%) & 0.3093 (+24.33\%) & 0.2855 (+30.16\%) & 0.3669 (+10.24\%) \\
\midrule
\multirow{4}{*}{ETTm1} & 96 & 0.3715 & 0.2560 (+31.08\%) & 0.2816 (+24.19\%) & 0.3282 (+11.65\%) & 0.2601 (+29.98\%) & 0.2578 (+30.61\%) \\
 & 192 & 0.4438 & 0.3558 (+19.83\%) & 0.3854 (+13.16\%) & 0.4027 (+9.27\%) & 0.3526 (+20.54\%) & 0.3903 (+12.06\%) \\
 & 336 & 0.5183 & 0.4117 (+20.57\%) & 0.4743 (+8.49\%) & 0.4416 (+14.79\%) & 0.4122 (+20.48\%) & 0.4883 (+5.80\%) \\
 & 720 & 0.5929 & 0.4913 (+17.13\%) & 0.5626 (+5.11\%) & 0.5169 (+12.81\%) & 0.5015 (+15.41\%) & 0.6236 (-5.19\%) \\
\midrule
\multirow{4}{*}{ETTm2} & 96 & 0.1598 & 0.1203 (+24.72\%) & 0.1202 (+24.78\%) & 0.1525 (+4.55\%) & 0.1151 (+27.96\%) & 0.1204 (+24.66\%) \\
 & 192 & 0.1930 & 0.1733 (+10.17\%) & 0.1648 (+14.60\%) & 0.1975 (-2.37\%) & 0.1679 (+13.00\%) & 0.1797 (+6.87\%) \\
 & 336 & 0.2324 & 0.1856 (+20.13\%) & 0.2092 (+10.00\%) & 0.2029 (+12.71\%) & 0.1815 (+21.90\%) & 0.2308 (+0.69\%) \\
 & 720 & 0.3062 & 0.2546 (+16.86\%) & 0.2879 (+5.97\%) & 0.2673 (+12.71\%) & 0.2538 (+17.09\%) & 0.3267 (-6.71\%) \\
\midrule
\multirow{4}{*}{Exchange} & 96 & 0.0913 & 0.0653 (+28.54\%) & 0.0705 (+22.81\%) & 0.0851 (+6.85\%) & 0.0627 (+31.33\%) & 0.0650 (+28.86\%) \\
 & 192 & 0.1827 & 0.1244 (+31.88\%) & 0.1594 (+12.76\%) & 0.1420 (+22.23\%) & 0.1166 (+36.19\%) & 0.1383 (+24.26\%) \\
 & 336 & 0.3277 & 0.1471 (+55.09\%) & 0.3018 (+7.89\%) & 0.1667 (+49.11\%) & 0.1366 (+58.31\%) & 0.2182 (+33.42\%) \\
 & 720 & 0.8873 & 0.4212 (+52.53\%) & 0.8363 (+5.74\%) & 0.4580 (+48.38\%) & 0.3940 (+55.59\%) & 0.4952 (+44.19\%) \\
\midrule
\multirow{4}{*}{Weather} & 96 & 0.1954 & 0.1192 (+39.00\%) & 0.1526 (+21.94\%) & 0.1545 (+20.94\%) & 0.1157 (+40.79\%) & 0.1221 (+37.53\%) \\
 & 192 & 0.2403 & 0.1621 (+32.55\%) & 0.2086 (+13.17\%) & 0.1826 (+23.99\%) & 0.1573 (+34.55\%) & 0.1940 (+19.24\%) \\
 & 336 & 0.2918 & 0.1919 (+34.24\%) & 0.2676 (+8.31\%) & 0.2096 (+28.18\%) & 0.1874 (+35.79\%) & 0.2570 (+11.92\%) \\
 & 720 & 0.3643 & 0.2322 (+36.28\%) & 0.3485 (+4.34\%) & 0.2422 (+33.53\%) & 0.2318 (+36.38\%) & 0.3389 (+6.97\%) \\
\bottomrule
\end{tabular}}
\end{table*}

\FloatBarrier
\subsection{Detailed Prefix-Contamination Results}

Table~\ref{tab:prefix_outlier_robustness_detail} extends the prefix-contamination diagnostic to DLinear, OLS, PatchTST, and MICN under the same $H=96$ protocol. The table averages the completed ETT runs for each backbone, so it measures whether the bounded correction field remains robust beyond a single forecasting architecture.

\begin{table*}[!t]
\centering
\caption{Prefix-contamination robustness across frozen backbones with $H=96$. Results are averaged over completed ETT datasets for each backbone. Each entry reports MSE under a visible-prefix outlier ratio; Deg. is the average degradation over nonzero ratios relative to the clean-prefix setting.}
\label{tab:prefix_outlier_robustness_detail}
\tiny
\setlength{\tabcolsep}{2.5pt}
\renewcommand{\arraystretch}{0.84}
\resizebox{\textwidth}{!}{
\begin{tabular}{llcccccc}
\toprule
\textbf{Backbone} & \textbf{Method} & \textbf{0\%} & \textbf{1\%} & \textbf{5\%} & \textbf{10\%} & \textbf{20\%} & \textbf{Deg.} \\
\midrule
\multirow{6}{*}{DLinear} & Zero-Shot & 0.3083 & 0.3083 & 0.3083 & 0.3083 & 0.3083 & 0.00\% \\
 & TAFAS & 0.2834 & 0.2976 & 0.3674 & 0.4376 & 0.5912 & 49.40\% \\
 & PETSA & 0.3793 & 0.4049 & 0.5039 & 0.6075 & 0.7806 & 51.39\% \\
 & COSA & 1.4400 & 1.5657 & 1.9827 & 2.3866 & 2.9811 & 54.79\% \\
 & STEPS w/o Bound & 0.2249 & 0.2331 & 0.2709 & 0.3236 & 0.4354 & 40.40\% \\
 & \textbf{STEPS} & \textbf{0.2273} & \textbf{0.2325} & \textbf{0.2568} & \textbf{0.2897} & \textbf{0.3556} & \textbf{24.77\%} \\
\midrule
\multirow{6}{*}{OLS} & Zero-Shot & 0.3031 & 0.3031 & 0.3031 & 0.3031 & 0.3031 & 0.00\% \\
 & TAFAS & 0.2736 & 0.2877 & 0.3575 & 0.4284 & 0.5823 & 51.32\% \\
 & PETSA & 0.3644 & 0.3907 & 0.4920 & 0.5980 & 0.7730 & 54.60\% \\
 & COSA & 1.4304 & 1.5564 & 1.9739 & 2.3787 & 2.9744 & 55.26\% \\
 & STEPS w/o Bound & 0.2191 & 0.2275 & 0.2664 & 0.3205 & 0.4346 & 42.52\% \\
 & \textbf{STEPS} & \textbf{0.2216} & \textbf{0.2267} & \textbf{0.2514} & \textbf{0.2847} & \textbf{0.3510} & \textbf{25.68\%} \\
\midrule
\multirow{6}{*}{PatchTST} & Zero-Shot & 0.3085 & 0.3085 & 0.3085 & 0.3085 & 0.3085 & 0.00\% \\
 & TAFAS & 0.2761 & 0.2905 & 0.3604 & 0.4311 & 0.5849 & 50.91\% \\
 & PETSA & 0.3708 & 0.3967 & 0.4967 & 0.5998 & 0.7739 & 52.87\% \\
 & COSA & 1.4323 & 1.5584 & 1.9754 & 2.3810 & 2.9751 & 55.17\% \\
 & STEPS w/o Bound & 0.2179 & 0.2261 & 0.2639 & 0.3169 & 0.4289 & 41.79\% \\
 & \textbf{STEPS} & \textbf{0.2229} & \textbf{0.2280} & \textbf{0.2521} & \textbf{0.3106} & \textbf{0.3508} & \textbf{28.00\%} \\
\midrule
\multirow{6}{*}{MICN} & Zero-Shot & 0.3484 & 0.3484 & 0.3484 & 0.3484 & 0.3484 & 0.00\% \\
 & TAFAS & 0.3175 & 0.3317 & 0.4010 & 0.4720 & 0.6244 & 44.00\% \\
 & PETSA & 0.4210 & 0.4469 & 0.5470 & 0.6476 & 0.8180 & 46.05\% \\
 & COSA & 1.5493 & 1.6652 & 2.0517 & 2.4361 & 3.0118 & 47.88\% \\
 & STEPS w/o Bound & 0.2502 & 0.2586 & 0.2978 & 0.3527 & 0.4692 & 37.71\% \\
 & \textbf{STEPS} & \textbf{0.2518} & \textbf{0.2568} & \textbf{0.2809} & \textbf{0.3141} & \textbf{0.3811} & \textbf{22.39\%} \\
\bottomrule
\end{tabular}}
\end{table*}

Across all four backbones, STEPS keeps the best or near-best clean-prefix MSE while showing the smallest degradation among adaptive methods under increasingly corrupted boundaries. The unbounded variant is often close under clean evidence, but its degradation grows much faster once outliers are injected into the prefix. This supports the role of SMF as a bounded field response: it preserves useful correction while reducing the chance that a short anomalous prefix is amplified into the future horizon.

\FloatBarrier
\subsection{Sparse-Anchor Extension Across Backbones}

The sparse-anchor experiment in the main text asks whether a few revealed observations can define a useful boundary for correction. We extend this diagnostic across frozen backbones and horizons using the most sparse setting, where only two positions in the first 36 prediction steps are replaced by true observations. Table~\ref{tab:appendix_sparse_anchor_backbone_extension} and Figure~\ref{fig:appendix_sparse_anchor_backbone_extension} report completed runs.

\begin{table*}[!t]
\centering
\caption{Sparse-anchor horizon extension with 5\% true anchors. The first 36 prediction steps form the support window, but only two positions are replaced by true observations; MSE is evaluated on $h=37{:}60$. Entries report average relative MSE improvement over the corresponding zero-shot backbone. All entries are completed runs averaged over six datasets.}
\label{tab:appendix_sparse_anchor_backbone_extension}
\scriptsize
\setlength{\tabcolsep}{2.5pt}
\renewcommand{\arraystretch}{0.84}
\begin{tabular}{llcccc}
\toprule
\textbf{Backbone} & \textbf{Method} & \textbf{96} & \textbf{192} & \textbf{336} & \textbf{720} \\
\midrule
\multirow{4}{*}{DLinear} & TAFAS & +4.56\% & +4.55\% & +4.54\% & +4.72\% \\
 & PETSA & +10.06\% & +8.56\% & +6.53\% & +4.50\% \\
 & COSA & -5.10\% & -5.08\% & -5.07\% & -5.14\% \\
 & STEPS & \textbf{+14.50\%} & \textbf{+21.61\%} & \textbf{+23.55\%} & \textbf{+21.75\%} \\
\midrule
\multirow{4}{*}{OLS} & TAFAS & +4.53\% & +4.53\% & +4.52\% & +4.52\% \\
 & PETSA & +9.98\% & +8.51\% & +6.51\% & +4.30\% \\
 & COSA & -5.12\% & -5.10\% & -5.08\% & -5.05\% \\
 & STEPS & \textbf{+12.57\%} & \textbf{+20.08\%} & \textbf{+23.58\%} & \textbf{+20.65\%} \\
\midrule
\multirow{4}{*}{PatchTST} & TAFAS & +4.60\% & +4.49\% & +4.49\% & +4.53\% \\
 & PETSA & +10.16\% & +8.43\% & +6.46\% & +4.30\% \\
 & COSA & -5.17\% & -5.07\% & -5.02\% & -5.08\% \\
 & STEPS & \textbf{+16.85\%} & \textbf{+22.31\%} & \textbf{+23.51\%} & \textbf{+20.25\%} \\
\midrule
\multirow{4}{*}{MICN} & TAFAS & +3.83\% & +3.80\% & +3.79\% & +3.85\% \\
 & PETSA & +8.18\% & +6.91\% & +5.28\% & +3.55\% \\
 & COSA & -4.96\% & -4.92\% & -4.89\% & -4.92\% \\
 & STEPS & \textbf{+11.80\%} & \textbf{+17.35\%} & \textbf{+19.26\%} & \textbf{+17.09\%} \\
\bottomrule
\end{tabular}
\end{table*}

\begin{figure*}[!t]
\centering
\includegraphics[width=0.95\textwidth]{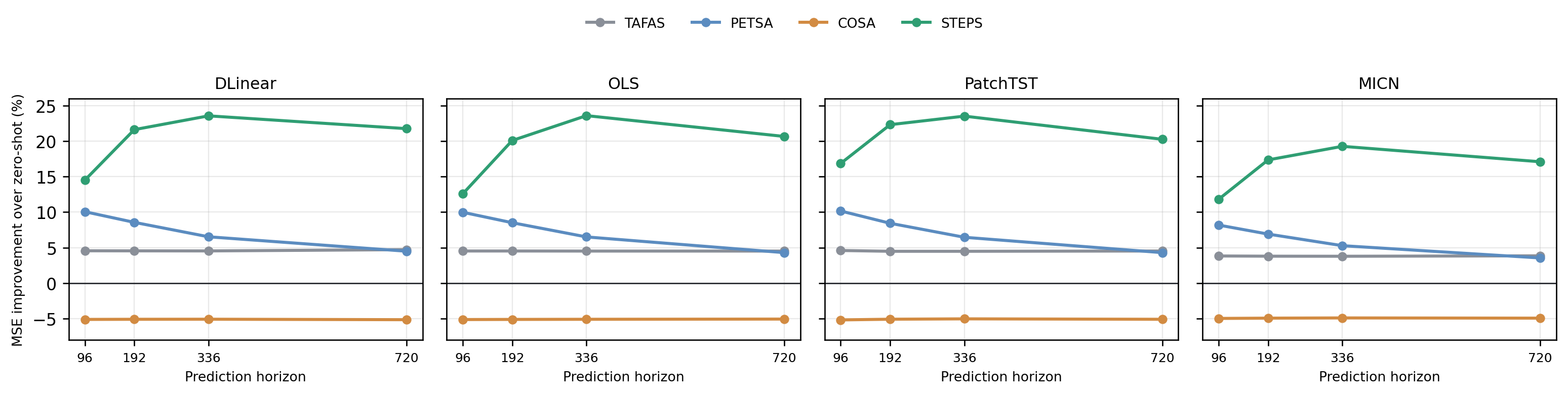}
\caption{Sparse-anchor horizon extension across frozen backbones. With only two true anchors, STEPS remains consistently above the other TTA baselines across horizons.}
\label{fig:appendix_sparse_anchor_backbone_extension}
\end{figure*}

The horizon sweep shows that the sparse-boundary behavior is not specific to DLinear or to $H=96$. Across all completed runs, STEPS gives the largest gain when only two anchors are visible. The gap is most pronounced at longer horizons, where direct prefix fitting has little evidence to determine a stable correction.

\FloatBarrier
\subsection{Compatibility with Normalization}

We further test whether STEPS merely duplicates the effect of input normalization, or whether its output-space error correction remains useful after normalization. We compare DLinear/ETTh1 under two forecasting protocols: \textbf{NoNorm}, which removes the per-window normalization in the forecasting wrapper while retaining dataset-level standardization, and \textbf{NST}, the default Non-stationary Transformer style per-window normalization and de-normalization used in our main experiments. Figure~\ref{fig:normalization_compatibility_mse} reports raw MSE, and Table~\ref{tab:normalization_compatibility_gain} reports the relative MSE reduction from adding STEPS to each corresponding base forecaster.

\begin{figure}[!t]
\centering
\includegraphics[width=\linewidth]{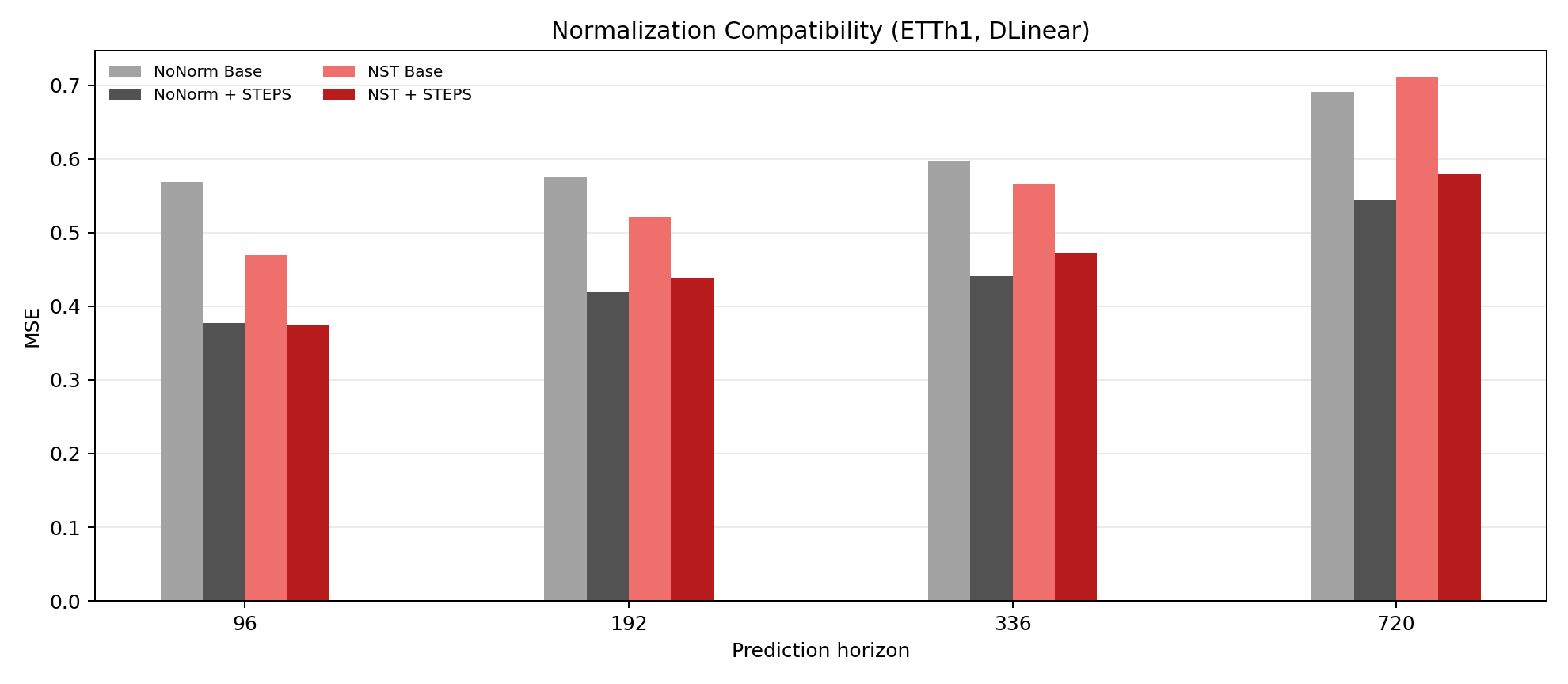}
\caption{Raw MSE under different normalization settings on DLinear/ETTh1. STEPS reduces error under both NoNorm and the default NST protocol, indicating that the error propagation solver is complementary to normalization rather than only replacing mean/scale correction.}
\label{fig:normalization_compatibility_mse}
\end{figure}

\begin{table}[!t]
\centering
\caption{Relative MSE reduction from adding STEPS under different normalization settings on DLinear/ETTh1. Positive values indicate improvement over the corresponding base forecaster.}
\label{tab:normalization_compatibility_gain}
\footnotesize
\setlength{\tabcolsep}{3.2pt}
\renewcommand{\arraystretch}{0.88}
\begin{tabular}{lcc}
\toprule
\textbf{Horizon} & \textbf{NoNorm + STEPS} & \textbf{NST + STEPS} \\
\midrule
$H=96$ & 33.62\% & 20.03\% \\
$H=192$ & 27.25\% & 15.81\% \\
$H=336$ & 26.08\% & 16.61\% \\
$H=720$ & 21.33\% & 18.62\% \\
\bottomrule
\end{tabular}
\end{table}

The improvement remains consistent after NST normalization, especially at the longest horizon where STEPS still reduces MSE by 18.62\%. This suggests that normalization handles part of the level and scale shift, while STEPS corrects remaining horizon-wise error structure from the revealed online prefix error.

\FloatBarrier
\subsection{Global Error Memory Decay Sensitivity}

STEPS maintains Global Error Memory $M_t=\rho M_{t-1}+(1-\rho)\bar{R}_t$. The decay factor $\rho$ controls how quickly the memory changes: small values react quickly to the current error batch, while large values preserve slower error patterns across rollout windows.

\begin{table}[!t]
\centering
\caption{Global Error Memory decay sensitivity on DLinear/ETTh1. Larger $\rho$ gives a slower memory update. Both $H=96$ and $H=720$ are reported to test whether accumulated error patterns remain useful as the horizon grows.}
\label{tab:memory_decay_sensitivity}
\footnotesize
\setlength{\tabcolsep}{3.0pt}
\renewcommand{\arraystretch}{0.88}
\begin{tabular}{llccccc}
\toprule
\textbf{Horizon} & \textbf{Metric} & 0.00 & 0.50 & 0.80 & 0.92 & 0.98 \\
\midrule
$H=96$ & MSE & 0.3847 & \textbf{0.3757} & 0.3828 & 0.4121 & 0.4208 \\
$H=96$ & MAE & 0.4136 & \textbf{0.4087} & 0.4136 & 0.4330 & 0.4381 \\
$H=720$ & MSE & \textbf{0.5690} & 0.5816 & 0.5755 & 0.6028 & 0.7117 \\
$H=720$ & MAE & \textbf{0.5503} & 0.5592 & 0.5515 & 0.5614 & 0.6099 \\
\bottomrule
\end{tabular}
\end{table}

\FloatBarrier
\subsection{Prefix Error Length Sensitivity}

The prefix error acts as a Dirichlet boundary for the local solver. We compare fixed prefix lengths against the FFT-estimated prefix used by STEPS. Very short boundaries are underdetermined and may be dominated by noise, while the adaptive prefix gives the most stable error evidence in this run.

\begin{table}[!t]
\centering
\caption{Prefix error length sensitivity on DLinear/ETTh1 with $H=96$. The prefix error is the visible boundary used by the local propagation solver; lower MSE/MAE is better.}
\label{tab:prefix_length_sensitivity}
\footnotesize
\setlength{\tabcolsep}{3.2pt}
\renewcommand{\arraystretch}{0.88}
\begin{tabular}{lcccccc}
\toprule
\textbf{Prefix} & 1 & 2 & 3 & 5 & 7 & FFT-est. \\
\midrule
MSE & 0.3945 & 0.3926 & 0.3902 & 0.3865 & 0.3830 & \textbf{0.3759} \\
MAE & 0.4237 & 0.4220 & 0.4201 & 0.4173 & 0.4146 & \textbf{0.4088} \\
\bottomrule
\end{tabular}
\end{table}

\FloatBarrier
\subsection{Local Error Smoothness Sensitivity}

The local branch performs smooth error propagation with temporal transfer operator $P=(D^\top D+\alpha I)^{-1}$. The smoothness parameter $\alpha$ controls how aggressively the prefix error is propagated. Very small $\alpha$ over-propagates the observed error boundary, while moderate regularization produces a stable correction.

\begin{table}[!t]
\centering
\caption{Local error smoothness sensitivity on DLinear/ETTh1 with $H=96$. The parameter $\alpha$ regularizes local error propagation; lower MSE/MAE is better.}
\label{tab:transfer_alpha_sensitivity}
\footnotesize
\setlength{\tabcolsep}{3.2pt}
\renewcommand{\arraystretch}{0.88}
\begin{tabular}{lcccccc}
\toprule
\textbf{$\alpha$} & 0.01 & 0.05 & 0.10 & 0.50 & 1.00 & 5.00 \\
\midrule
MSE & 0.3763 & \textbf{0.3757} & 0.3759 & 0.3786 & 0.3820 & 0.3871 \\
MAE & 0.4093 & \textbf{0.4088} & 0.4088 & 0.4106 & 0.4131 & 0.4167 \\
\bottomrule
\end{tabular}
\end{table}

\FloatBarrier
\subsection{Horizon-Aware Fusion Schedule Visualization}

To make the SMF behavior more transparent, we visualize the fusion schedule used by STEPS. The final correction has the form
\[
	\Delta
	=
	\mathrm{clip}
	\left(
	\Delta_{\mathrm{short}}
	+
	\gamma q(h)\Delta_{\mathrm{long}},
	-c,c
	\right),
\]
so the local branch enters with unit coefficient, while the global branch is scaled by $\gamma q(h)$. In this diagnostic, we use the default SMF settings $\gamma=0.7$, $\kappa=8.0$, and $\tau=0.25$. We also report normalized local/global shares under an equal-magnitude response assumption, i.e.,
\[
	w_{\mathrm{local}}(h)=\frac{1}{1+\gamma q(h)},
	\qquad
	w_{\mathrm{global}}(h)=\frac{\gamma q(h)}{1+\gamma q(h)}.
\]
These normalized shares are not extra learned gates; they are a visualization of the relative mixing pressure implied by SMF when local and global response magnitudes are comparable.

\begin{figure}[!t]
\centering
\includegraphics[width=\linewidth]{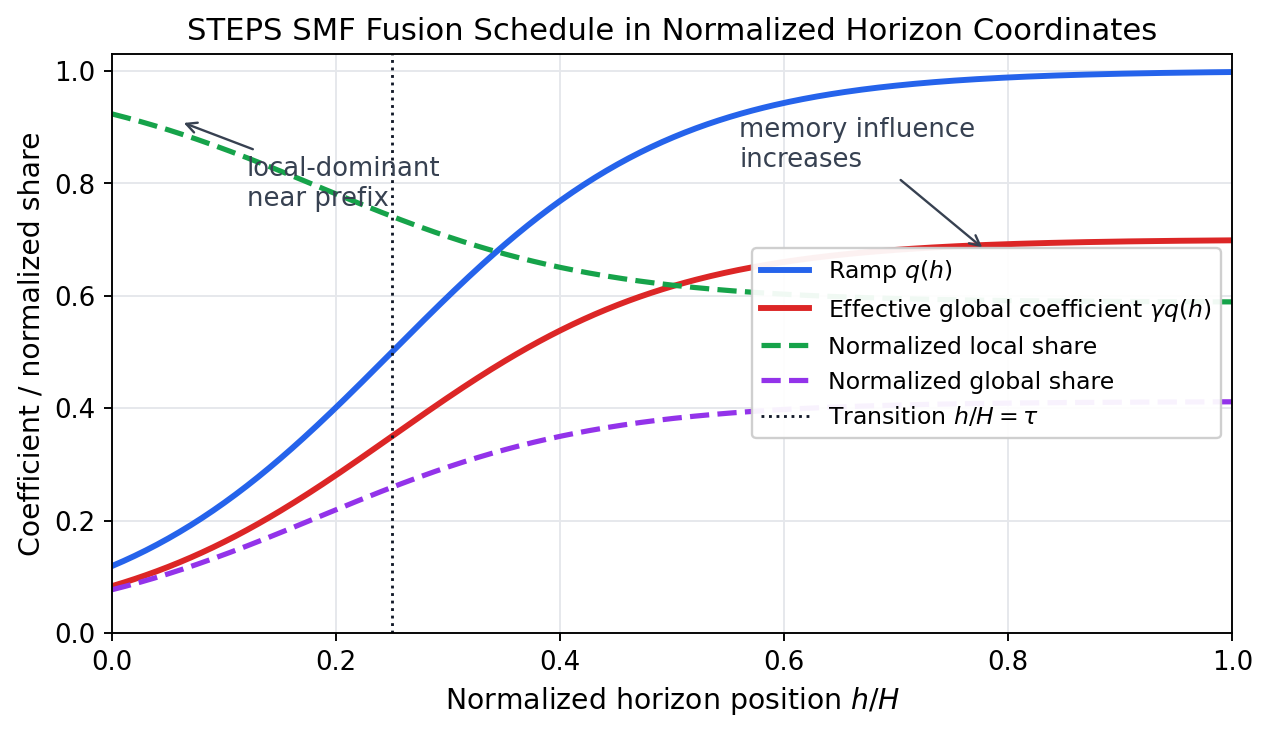}
\caption{SMF fusion schedule in normalized horizon coordinates. Since $q(h)$ is a function of $h/H$, all forecasting horizons share the same relative schedule: local propagation dominates near the prefix, while Global Error Memory becomes increasingly influential toward distant steps.}
\label{fig:steps_fusion_schedule_normalized}
\end{figure}

\begin{table}[!t]
\centering
\caption{Discrete interpretation of the normalized SMF schedule across forecasting horizons. Start, Mid, and End report normalized local/global shares under the equal-magnitude visualization above. The relative shares are almost identical across horizons because the gate uses $h/H$; the absolute transition step $\tau H$ scales with $H$.}
\label{tab:steps_fusion_schedule_horizons}
\footnotesize
\setlength{\tabcolsep}{3.2pt}
\renewcommand{\arraystretch}{0.9}
\begin{tabular}{ccccc}
\toprule
\textbf{Horizon} & \textbf{Transition} & \textbf{Start L/G} & \textbf{Middle L/G} & \textbf{End L/G} \\
 & $\boldsymbol{\tau H}$ & & & \\
\midrule
$H=96$ & 24 & 92.3 / 7.7 & 62.0 / 38.0 & 58.9 / 41.1 \\
$H=192$ & 48 & 92.3 / 7.7 & 61.9 / 38.1 & 58.9 / 41.1 \\
$H=336$ & 84 & 92.3 / 7.7 & 61.9 / 38.1 & 58.9 / 41.1 \\
$H=720$ & 180 & 92.3 / 7.7 & 61.9 / 38.1 & 58.9 / 41.1 \\
\bottomrule
\end{tabular}
\end{table}

Figure~\ref{fig:steps_fusion_schedule_normalized} and Table~\ref{tab:steps_fusion_schedule_horizons} clarify why plotting four separate curves against their own horizon ranges produces nearly identical shapes. This is expected: SMF is intentionally horizon-normalized through $h/H$, so each forecast length follows the same relative mixing policy. What changes across horizons is not the relative curve shape but the absolute step at which the transition occurs. With $\tau=0.25$, the transition point is step 24 for $H=96$, step 48 for $H=192$, step 84 for $H=336$, and step 180 for $H=720$.

This design makes the fusion rule comparable across forecast lengths. Near the beginning, all horizons give a strongly local-dominant split of about $92.3\%/7.7\%$, reflecting that the prefix boundary is most informative nearby. Around the middle, the global share rises to about $38\%$. At the final step, the effective global coefficient approaches $\gamma=0.7$, yielding a normalized local/global split of about $58.9\%/41.1\%$. Thus, STEPS does not abruptly switch from local to global correction; it gradually increases memory influence while retaining a nontrivial local component throughout the horizon.

\FloatBarrier
\subsection{Module-Only Efficiency Diagnostics}

Finally, we evaluate the online computational cost of the STEPS correction module. Since the forecasting backbone is frozen, Table~\ref{tab:latency_safety_rfmd} reports only the additional output-space correction cost after the backbone prediction. Runtime is measured on synthetic prediction windows with batch size 48, $d=7$, and a visible prefix length of 4, excluding the frozen forecasting backbone and data loading. Correction time includes Local Smooth Propagation, Global Error Memory decoding, SMF, and memory update. Throughput is reported as adapted prediction windows per second.

\begin{table*}[!t]
\centering
\caption{Module-only online efficiency of STEPS across forecasting horizons. Runtime is measured with batch size 48 on synthetic prediction windows and excludes the frozen forecasting backbone and data loading. FLOPs count the dominant decoder linear layers per adapted prediction window. Throughput reports adapted prediction windows per second.}
\label{tab:latency_safety_rfmd}
\scriptsize
\setlength{\tabcolsep}{4.0pt}
\renewcommand{\arraystretch}{0.86}
\resizebox{\textwidth}{!}{
\begin{tabular}{ccccccc}
\toprule
\textbf{Horizon} & \textbf{Trainable Params} & \textbf{FLOPs/window} & \textbf{Peak Mem.} & \textbf{Correction Time} & \textbf{Time/window} & \textbf{Throughput} \\
 & & & & \textbf{(ms/batch)} & \textbf{(ms)} & \textbf{(windows/s)} \\
\midrule
$H=96$ & 215.6K & 3.01M & 12.88 MB & 2.42 & 0.050 & 19,835 \\
$H=192$ & 363.2K & 5.08M & 15.70 MB & 2.54 & 0.053 & 18,898 \\
$H=336$ & 584.5K & 8.17M & 20.03 MB & 2.77 & 0.058 & 17,329 \\
$H=720$ & 1.17M & 16.43M & 31.80 MB & 3.42 & 0.071 & 14,035 \\
\bottomrule
\end{tabular}}
\end{table*}

The timing table isolates the STEPS correction overhead rather than end-to-end forecasting cost. Even at $H=720$, the full online correction takes 3.42 ms per batch, or 0.071 ms per adapted prediction window, while requiring no online backpropagation or backbone update. The FLOPs and parameter count increase with the horizon because the memory decoder maps horizon-length fields, but the measured correction latency remains small relative to typical backbone inference and data-loading overhead.

\end{document}